\DeclareMathOperator*{\argmin}{argmin}
\newcommand{\nn}{\nonumber}
\newcommand{\ba}{\text{\boldmath{$a$}}}
\newtheorem{thm}{Theorem}%[section]
\newtheorem{lem}{Lemma}
\newtheorem{rem}{Remark}
\newtheorem{assum}{Assumption}
\title{Active and Adaptive Sequential learning}
\author{
    Yuheng Bu \thanks{Equal contribution}\ \ \thanks{ECE Department and the
Coordinated Science Laboratory, University of Illinois at Urbana-Champaign,
Urbana, IL, USA. Email: \{bu3, vvv\}@illinois.edu}\\
  %% examples of more authors
  \And
  Jiaxun Lu \footnotemark[1]\ \  \thanks{EE Department, Tsinghua University,
Beijing, China. Email: lujx14@mails.tsinghua.edu.cn} \\
  \And
  Venugopal V. Veeravalli  \footnotemark[2]\\
  %% \And
  %% Coauthor \\
  %% Affiliation \\
  %% Address \\
  %% \texttt{email} \\
  %% \And
  %% Coauthor \\
  %% Affiliation \\
  %% Address \\
  %% \texttt{email} \\
}
\begin{document}
% \nipsfinalcopy is no longer used

\maketitle

\begin{abstract}
A framework is introduced for actively and adaptively solving a sequence of machine learning problems, which are changing in bounded manner from one time step to the next. An algorithm is developed that actively queries the labels of the most informative samples from an unlabeled data pool, and that adapts to the change by utilizing the information acquired in the previous steps. Our analysis shows that the proposed active learning algorithm based on stochastic gradient descent achieves a near-optimal excess risk performance for maximum likelihood estimation. Furthermore, an estimator of the change in the learning problems using the active learning samples is constructed, which provides an adaptive sample size selection rule that guarantees the excess risk is bounded for sufficiently large number of time steps. Experiments with synthetic and real data are presented to validate our algorithm and theoretical results.
\end{abstract}
%Adaptively selecting the number of samples
%and  sequentially applying optimization algorithm to minimize the loss function, such as stochastic gradient descent (SGD).
%along with analysis to show that eventually this estimate upper bounds the change in minimizers.
%An estimate of a bound on the minimizers’ change, combined with properties of the chosen optimization algorithm, is used to select the number of samples needed to
%meet the desired tracking criterion.
%When the upper bound of the change in the minimizers is known, we provide theoretical guarantees for our active and adaptive learning algorithm, which ensures the excess risk of each learning task is always bounded.

\section{Introduction}
\label{Sec:Introduction}

Machine learning problems that vary in a bounded manner over time naturally arise in many applications. For example, in personalized recommendation systems \cite{elahi2016survey,rubens2015active}, the preferences of users might change with fashion trends. Since acquiring new training samples from users can be expensive in practice, a recommendation system needs to update the machine learning model and adapt to this change using as few new samples as possible.

%Other applications including personalized spam detection \cite{bickel2006ecml}, channel estimation and parameter tracking.

In such problems, we are given a large set of unlabeled samples, and the learning tasks are solved by minimizing the expected value of an appropriate loss function on this unlabeled data pool at each time $t$. To capture the idea that the sequence of learning problems is changing in a bounded manner, we assume the following bound holds
\begin{equation}\label{equ:BoundedChangeRate}
\|\theta_{t}^*-\theta_{t-1}^*\|_{2} \leq \rho,\quad \forall t\ge 2,
\end{equation}
where $\theta_{t}^*$ is the true minimizer of the loss functions at time $t$, and $\rho$ is a finite upper bound on the change of minimizers, which needs to be estimated in practice.

%$\ell(z,\theta)$ at each time $t$:
%\begin{equation}\label{equ:intro}
%  \min_{\theta\in \Theta} \Big\{L_{U_t}(\theta)\triangleq \mathbb{E}_{Z_t\sim p_t}[\ell(Z_t,\theta)] \Big\}, \quad \forall t\ge 1,
%\end{equation}
%where $p_t$ denotes the underlying (unknown) probabilistic distribution for the data $z_t$, which is time-varying. The data $z_t=\{x_t,y_t\}$ corresponds to the \{predictor, response\} pairs in a regression problem and corresponds to the \{feature, label\} pairs in a classification problem, and $\theta$ parameterizes the regression model or the classifier in these two problems, respectively.

%and we need to interactively query the labels of the most informative samples to adapt to changes.

To tackle this sequential learning problem, we propose an \emph{active} and \emph{adaptive} algorithm to learn the approximate minimizers $\hat{\theta}_t$ of the loss function. At each time $t$, our algorithm actively queries the labels of $K_t$ samples from the unlabeled data pool, with a well-designed active sampling distribution, which is adaptive to the change in the minimizers by utilizing the information acquired in the previous steps. In particular, we adaptively select $K_t$ and construct $\hat{\theta}_t$ such that the excess risk \cite{mohri2012foundations} is bounded at each time $t$.

The challenges of this active and adaptive sequential learning problem arise in three aspects:  1) we need to determine which samples are more informative for solving the task at the current time step based on the information acquired in the previous time steps to conduct active learning; 2) to achieve a desired bounded excess risk with as few new samples as possible, we need to understand the tradeoff between the solution accuracy and the adaptively determined sample complexity $K_t$; 3) the change in the minimizers $\rho$ is unknown and we need to estimate it.

%Under this model, we sequentially learn approximate
%minimizers $\hat{\theta}_t$ of function $L_t(\theta)$ using $K_t$ training samples
%$\{z_{k,t}\}_{k=1}^{K_t} \sim p_t$ at time $t$ by applying an optimization algorithm
%such as stochastic gradient descent (SGD) starting from the previous approximate
%minimizer $\hat{\theta}_{t-1}$. Unlike the previous work on adaptive sequential learning \citet{wilson2018adaptive}, where the training samples are i.i.d. drawn from $p_t$ passively, since the change in the minimizer is bounded by \eqref{equ:BoundedChangeRate}, we study the case where training samples used at time $t$ are chosen actively based on the knowledge we learnt in previous step $t-1$, i.e., active adaptive sequential learning.

%Our goal is to \emph{adaptively} determine the number of samples $K_t$ required to achieve a desired excess risk for each $t$, and \emph{actively} choose $K_t$ samples from the data pool and query their responses (labels) based on the previous estimation to further reduce the excess risk. Moreover, $\rho$ is unknown in practice, we also need to construct an estimates of $\rho$, which can be used to determine the number of samples $K_t$ required to achieve a target excess risk.

Our contributions in this paper can be summarized as follows. We propose an active and adaptive learning framework with theoretical guarantees to solve a sequence of learning problems, which ensures a bounded excess risk for each individual learning task when $t$ is sufficiently large. We construct a new estimator of the change in the minimizers $\hat{\rho}_t$ with active learning samples and show that this estimate upper bounds the true parameter $\rho$ almost surely. We test our approaches on a synthetic regression problem, and further apply it to a recommendation system that tracks changes in preferences of customers. Our experiments demonstrate that our algorithm achieves a better performance compared to the other baseline algorithms in these scenarios.

\subsection{Related Work}

Our active and adaptive learning problem has relations with \emph{multi-task learning} (MTL) and \emph{transfer learning}. In multi-task learning, the goal is to learn several tasks simultaneously as in \cite{agarwal2010learning,evgeniou2004regularized,zhang2012convex} by exploiting the similarities between the tasks. In transfer learning, prior knowledge from one source task is transferred to another target task either with or without additional training data \cite{pan2010survey}. Multi-task learning could be applied to solve our problem by running a MTL algorithm at each time, while remembering all prior tasks. However, this approach incurs a heavy memory and computational burden. Transfer
learning lacks the sequential nature of our problem, and there is no active learning component in both works. For multi-task and transfer learning, there are theoretical guarantees on regret for some algorithms \cite{agarwal2008matrix}, while we provide an excess risk guarantee for each individual task.

In \emph{concept drift} problem, stream of incoming data that changes over time is observed, and we try to predict some properties of each piece of data as it arrives. After prediction, a loss is revealed as the feedback in \cite{towfic2013distributed}. Some approaches for concept drift use iterative algorithms such as stochastic gradient descent, but without specific models on how the data changes, there is no theoretical guarantees for these algorithms.

Our work is of course related to active learning \cite{dasgupta2006coarse,balcan2006agnostic}, in which a learning algorithm is able to interactively query the labels of samples from an unlabeled data pool to achieve better performance. A standard approach to active learning is to select the unlabeled samples by optimizing specific statistics of these samples \cite{cornell2011experiments}.  For example, with the goal of minimizing the expected excess risk in maximum likelihood estimation, the authors of \cite{chaudhuri2015convergence,sourati2017asymptotic} propose a two-stage algorithm based on Fisher information ratio to select the most informative samples, and show that it is optimal in terms of the convergence rate. We apply similar algorithms in our problem, but the first stage of estimating the Fisher information using labeled samples to conduct active learning can be skipped by exploiting the bounded nature of the change, and utilizing information obtained in previous time steps.

%Active learning has wide applications in which unlabeled data is abundant but manually labeling is expensive, for example, recommendation systems \cite{elahi2016survey,rubens2015active}.

Our approach is closely related to prior work on adaptive sequential learning \cite{wilson2018adaptive,wilson2016adaptive}, where the training samples are drawn passively and the adaptation is only in the selection of the number of training samples $K_t$ at each time step. %In this work, we study the case where training samples used at time $t$ are chosen actively and adaptively based on the information acquired in the previous time steps.

The rest of the paper is organized as follows. In Section \ref{Sec:ProblemFormulation}, we describe the problem setting considered. In Section \ref{Sec:Algorithm}, we present our active and adaptive learning algorithm. In Section \ref{Sec:Theorem}, we provide the theoretical analysis which motivates the proposed algorithm. In Section \ref{Sec:experiments}, we test our algorithm on synthetic and real data. Finally, in Section \ref{Sec:conclusion}, we provide some concluding remarks.

%In Section \ref{Sec:SeqActLearning_unknown}, we construct an estimator of $\rho$ for our active learning algorithm when $\rho$ is unknown, along with the analysis to show that eventually the estimate upper bounds the change in minimizers.

\section{Problem Setting}
\label{Sec:ProblemFormulation}

Throughout this paper, we use lower case letters to denote scalars and vectors, and use upper case letters to denote random variables and matrices. All logarithms are the natural ones. We use $I$ to denote an identity matrix of appropriate size. We use the superscript $(\cdot)^\top$ to denote the transpose of a vector or a matrix, and use $\mathrm{Tr}(A)$ to denote the trace of a square matrix $A$. We denote $\|x\|_A=\sqrt{x^\top A x}$ for a vector $x$ and a matrix $A$ of appropriate dimensions.

We consider the active and adaptive sequential learning problem in the maximum likelihood estimation (MLE) setting. At each time $t$, we are given a pool $\mathcal{S}_{t}=\{ x_{1,t},\cdots,x_{N,t}\}$ of $N_t$ unlabeled samples drawn from some instance space $\mathcal{X}$. We have the ability to interactively query the labels of $K_t$ of these samples from a label space $\mathcal{Y}$. In addition, we are given a parameterized family of distribution models $\mathcal{M}=\{ p(y|x,\theta_{t}), \theta_{t}\in\Theta\}$, where $\Theta\subseteq \mathbb{R}^d$. We assume that there exists an unknown parameter $\theta_{t}^*\in\Theta$ such that the label $y_{t}$ of $x_{t} \in \mathcal{S}_{t}$ is actually generated from the distribution $p(y_t|x_{t},\theta_{t}^*)$.

For any $x\in \mathcal{X}$, $y\in \mathcal{Y}$ and $\theta\in \Theta$, we let the loss function be the negative log-likelihood with parameter $\theta$, i.e.,
\begin{equation}
\ell(y|x,\theta) \triangleq -\log p(y|x,\theta),\quad  p(y|x,\theta)\in \mathcal{M}.
\end{equation}
Then, the expected loss function over the uniform distribution on the data pool $\mathcal{S}_{t}$ can be written as
\begin{equation}\label{equ:ExpectedLogLikelihood}
L_{U_t}(\theta) \triangleq \mathbb{E}_{X \sim U_t,Y \sim p(Y|X,\theta_t^*)}[ \ell(Y|X,\theta) ],
\end{equation}
where we use $U_t$ to denote the uniform distribution over the samples in $\mathcal{S}_{t}$. It can be seen that the minimizer of $L_{U_t}(\theta)$ is the true parameter $\theta_t^*$. As mentioned in \eqref{equ:BoundedChangeRate}, we assume that $\theta_{t}^*$ is changing at a bounded but unknown rate, i.e., $\|\theta_{t}^*-\theta_{t-1}^*\|_{2} \leq \rho$, for $t\ge 2$.

The quality of our approximate minimizers $\hat{\theta}_t$ are evaluated through a \emph{mean tracking criterion}, which means that the excess risk of $\hat{\theta}_t$ is bounded at each time step $t$, i.e.,
\begin{equation}\label{equ:excess_risk_criterion}
  \mathbb{E}[L_{U_t}(\hat{\theta}_t) - L_{U_t}({\theta}_t^*)]\le \varepsilon.
\end{equation}
Thus, our goal is to actively and adaptively select the smallest number of samples $K_t$ in $\mathcal{S}_{t}$ to query labels, and sequentially construct an estimate of $\hat{\theta}_t$ satisfying the above mean tracking criterion for each time step $t$. Note that it is allowed to query the label of the same sample multiple times.

Let $\Gamma_t$ be an arbitrary sampling distribution on $\mathcal{S}_t$. Then, the following MLE using $\Gamma_t$
\begin{equation}\label{equ:MLE}
\hat{\theta}_{\Gamma_{t}} \triangleq \argmin_{\theta \in\Theta} \frac{1}{K_t} \sum_{k=1}^{K_t} \ell(Y_{k,t}|X_{k,t},\theta),
\end{equation}
can be viewed as an empirical risk minimizer (ERM) of \eqref{equ:ExpectedLogLikelihood}, where $X_{k,t} \sim \Gamma_{t}$, $Y_{k,t} \sim p(Y|X_{k,t},\theta_t^*)$.

To ensure that our algorithm works correctly, we require the following assumption on the Hessian matrix of  $\ell(y|x,\theta)$, which determines the Fisher information matrix.
\begin{assum}\label{cond:IndependCond}
For any $x\in\mathcal{X}$, $y\in\mathcal{Y}$, $\theta\in\Theta$, $H(x,\theta)\triangleq\frac{\partial^2 \ell(y|x, \theta)}{\partial \theta^2}$ is a function of only $x$ and $\theta$ and does not depend on $y$.
\end{assum}
Assumption \ref{cond:IndependCond} holds for many practical models, such as generalized linear model, logistic regression and conditional random fields \cite{chaudhuri2015convergence}.  Moreover, for $\theta\in \Theta$, we denote $I_{\Gamma_t}(\theta) \triangleq \mathbb{E}_{X\sim\Gamma_t} [H(X,\theta)]$  as the Fisher information matrix under sampling distribution $\Gamma_t$.

%The convergence rate of the excess risk using MLE is well-established in the literature, as shown in \cite{chaudhuri2015convergence} and Lemma \ref{lem:BoundsWithArbitrarySamplingDistri} in this paper,
%\begin{equation}\label{equ:ERM_rate}
%  \tau_t^2 \triangleq \lim_{K_t \to \infty} \frac{\mathbb{E}[L_{U_t}(\hat{\theta}_{\Gamma_t}) - L_{U_t}(\theta_t^*)]}{1/K_t} = \frac{1}{2}\mathrm{Tr}\big( I_{\Gamma_t}^{-1}(\theta_t^*) I_{U_t}(\theta_t^*) \big)
%\end{equation}

%

%\section{Main Results}
%\label{Sec:main_results}

%In this section, we first introduce our adaptive and active sequential learning algorithm, and present the theoretical performance guarantees of the proposed algorithm.
%
%%, which is proved to be achievable in practical applications.

\section{Algorithm}
\label{Sec:Algorithm}

The main idea of our algorithm is to adaptively choose the number of samples $K_t$ based on the estimated change in the minimizers $\hat{\rho}_{t-1}$ such that the mean tracking criterion in \eqref{equ:excess_risk_criterion} is satisfied, then actively query the labels of these $K_t$ samples with a well-designed sampling distribution $\Gamma_t$, and finally perform MLE in \eqref{equ:MLE} using a stochastic gradient descent (SGD) algorithm over the labeled samples. By executing this algorithm iteratively, we can sequentially learn $\hat{\theta}_{t}$ over all the considered time steps. The algorithm is formally presented in Algorithm \ref{alg:AdapActSeqLearning}.

%For each time $t$, our algorithm contains five steps:
%\begin{enumerate}
%  \item Constructing active sampling distribution $\Gamma_t$ using previous estimation $\hat{\theta}_{t-1}$;
%  \item Adaptively selecting the number of samples $K_t$ based on the estimated change in the minimizer $\hat{\rho}_{t-1}$ such that the mean tracking criterion in \eqref{equ:excess_risk_criterion} is satisfied;
%  \item Querying labels of $K_t$ samples generated by $\Gamma_t$ and performing ML estimation using SGD algorithm satisfying Assumption \ref{assum:SGD_bound} over them;
%  \item Estimating the change in the minimizer $\hat{\rho}_t$ with the $\hat{\theta}_{t}$ and $\hat{\theta}_{t-1}$.
%\end{enumerate}

To ensure a good performance with limited querying samples, it is essential to construct $\Gamma_t$  carefully. Motivated by Lemma \ref{lem:BoundsWithArbitrarySamplingDistri} in Section \ref{Sec:active}, the convergence rate of the excess risk for ERM using $K_t$ samples from $\Gamma_t$ is $\mathrm{Tr}( I_{\Gamma_t}^{-1}(\theta_t^*)I_{U_t}(\theta_t^*) )/K_t$. Thus, the optimal sampling distribution $\Gamma_t^*$ should be the one that minimizes $\mathrm{Tr}( I_{\Gamma_t}^{-1}(\theta_t^*) I_{U_t}(\theta_t^*) )$, which relies on the unknown parameter $\theta_t^*$. Based on the bounded nature of the change in \eqref{equ:BoundedChangeRate}, we solve this problem by approximating $\theta_{t}^*$ with $\hat{\theta}_{t-1}$  and generate the sampling distribution $\hat{\Gamma}_t^*$ by minimizing $\mathrm{Tr}( I_{\Gamma_t}^{-1}(\hat{\theta}_{t-1}) I_{U_t}(\hat{\theta}_{t-1}) )$ (Step 1).

%In next subsection, we show that this approximation is good enough given the slowly change condition as in \eqref{equ:BoundedChangeRate}.

\begin{algorithm}[h]
   \caption{Active and Adaptive Sequential Learning}
   \label{alg:AdapActSeqLearning}
\begin{algorithmic}
   \STATE {\bfseries Input:} Sample pool $\mathcal{S}_{t}=\{ x_{1,t},\cdots,x_{N,t}\}$, the previous estimation $\hat{\theta}_{t-1}$, $\hat{\rho}_{t-1}$ and the desired mean tracking accuracy $\varepsilon$.
   \STATE 1: Solve the following semidefinite programming problem (see Section \ref{Sec:active})
   \begin{flalign*}
  \hat{\Gamma}_t^* = \argmin_{\Gamma_t \in \mathbb{R}^{N_t}}\,\,\, {\mathrm{Tr}}[I_{\Gamma_t}^{-1}(\hat{\theta}_{t-1})I_{U_{t}}(\hat{\theta}_{t-1})] \quad \textrm{s.t.}\,\,\,&
  \begin{cases}
  I_{\Gamma_t}(\hat{\theta}_{t-1}) = \sum_{i=1}^{N_t} \Gamma_{i,t} H(x_{i,t},\hat{\theta}_{t-1} ),\\
  \sum_{i=1}^{N_t} \Gamma_{i,t} = 1,\, \Gamma_{i,t}\in[0,1].
  \end{cases}
  \end{flalign*}
   \STATE 2: Choose $K_t^*$ based on $\hat{\rho}_{t-1}$ such that it is the minimum number of samples required to meet the mean tracking criterion (see Section \ref{Sec:mean_tracking}).
%   Determine the number of samples as follows (refer Theorem \ref{thm:mean_tracking})
%   \begin{equation*}
%   K_t = \min\Big\{K\ge 1 \Big| b\Big(d, \sqrt{\frac{2\varepsilon}{m}}+\hat{\rho}_{t-1}, K \Big) \le \varepsilon \Big\}.
%   \end{equation*}
   \STATE 3: Generate $K^*_t$ samples using the distribution $\bar{\Gamma}_t=\alpha_{t}\hat{\Gamma}_t^*+(1-\alpha_{t})U_t$ on unlabeled data pool $\mathcal{S}_t$, where $\alpha_{t}\in(0,1)$. Query their labels and get the labeled set $\mathcal{S}_{t}'=\{ (x_{k,t},y_{k,t})\}_{k=1}^{K_t^*}$.
   \STATE 4: Solve the MLE using labeled set $S_{t}'$ with a SGD algorithm initialized at $\hat\theta_{t-1}$,
   \begin{equation*}
   \hat{\theta}_{t} = \argmin_{\theta_{t}\in\Theta} \sum_{(x_{k,t},y_{k,t})\in \mathcal{S}_{t}'} \ell(y_{k,t}|x_{k,t},\theta_{t}).
   \end{equation*}
   \STATE 5: Update the estimate of $\hat{\rho}_{t}$ using estimator defined in Section \ref{Sec:estimator} for $\forall t \geq 2$.
   \STATE {\bfseries Output:} $\hat{\theta}_{t}$, $\hat{\rho}_{t}$.
\end{algorithmic}
\end{algorithm}

Then, as shown in Section \ref{Sec:mean_tracking}, we use the minimum number of samples $K_t^*$ such that the mean tracking criterion is satisfied, and actively draw samples from $\bar{\Gamma}_t$ to estimate $\hat{\theta}_{t}$  (Steps 2-4). Note that the distribution $\hat{\Gamma}_t^*$ is modified slightly to $\bar{\Gamma}_t$ in Step 3 to ensure it still has the full support of $\mathcal{S}_t$.

Finally, based on the current and previous estimation $\hat{\theta}_{t}$ and $\hat{\theta}_{t-1}$, we update the estimate of the bounded change rate $\hat{\rho}_t$ by the estimator proposed in Section \ref{Sec:estimator}.

It is easy to see that the active nature of Algorithm \ref{alg:AdapActSeqLearning} comes from the active sampling distribution, which is constructed by minimizing the Fisher information ratio as in Step 1. But the adaptivity of Algorithm \ref{alg:AdapActSeqLearning} is more complex and results from the following three aspects: 1) The sampling distribution is adaptive to the bounded change through the replacement of $\theta^*_t$ with $\hat{\theta}_{t-1}$ in Step 1; 2) The sample size selection rule is adaptive through the selection of the minimum  number of samples required in Step 2; 3) The SGD is adaptive through the initialization by  $\hat\theta_{t-1}$ in Step 4. %, which speeds up the convergence of SGD algorithm.

%Here, we present the pseudo-code of the one step active sequential learning algorithm  in Algorithm \ref{alg:AdapActSeqLearning}. At each time $t$, we are given an unlabeled data pool $\mathcal{S}_t$ and the estimation of $\hat{\theta}_{t-1}$ in the previous step. We develop the approximate optimal sampling distribution $\hat{\Gamma}_t^*$ using $\hat{\theta}_{t-1}$ as shown by Step 2 in Algorithm \ref{alg:AdapActSeqLearning}. Then, the distribution $\hat{\Gamma}_t^*$ is modified slightly to $\bar{\Gamma}_t$ in Step 3 to ensure it still has the full support of $\mathcal{S}_t$. Using the labeled sample set $\mathcal{S}_t'$, we construct the estimation of $\hat{\theta}_t$ by solving the MLE problem as shown in Step 4.

\section{Theoretical Performance Guarantees}
\label{Sec:Theorem}
In this section, we present the theoretical analysis of Algorithm \ref{alg:AdapActSeqLearning}. We first introduce the assumptions needed. Then, in Section \ref{Sec:active}, we provide the analysis of the active sampling distribution. In Section \ref{Sec:mean_tracking}, we present theoretical guarantees on the sample size selection rules which meet the mean tracking criterion in \eqref{equ:excess_risk_criterion}. In Section \ref{Sec:estimator}, we describe the proposed estimator $\hat{\rho}_t$. The proofs of the theorems and all the supporting lemmas will be presented in the Appendices.
\subsection{Assumptions}
For the purpose of analysis, the following regularity assumption on the log-likelihood function $\ell$ is required to establish the standard Local Asymptotic Normality of the MLE \cite{van2000asymptotic}.
\begin{assum}[Regularity conditions]\label{assump:regularity}
\hfill
\begin{enumerate} \item \textbf{Regularity conditions for MLE:}
\begin{enumerate}
  \item \textbf{Compactness}: $\Theta$ is compact and $\theta_t^*$ is an interior point of $\Theta$ for each $t$.
  \item \textbf{Smoothness}: $\ell(y|x,\theta)$ is smooth in the following sense: the first, second and third derivatives of $\theta$ exist at all interior points of $\Theta$.
  \item \textbf{Strong Convexity}: For each $t$ and $\theta \in \Theta$, $I_{U_t}(\theta)\succeq m I$ with $m>0$, and hence $I_{U_t}(\theta)$ is positive definite and invertible.
  \item \textbf{Boundedness}: For all $\theta\in \Theta$, the largest eigenvalue of $I_{U_t}(\theta)$ is upper bounded by $L_{b}$.
\end{enumerate}
  \item \textbf{Concentration at $\theta_t^*$}: For all $t$, and any $x_t \in \mathcal{S}_t$, $y_t\in \mathcal{Y}$,
  \begin{align}
    \Big\|\nabla\ell(y_t|x_t,\theta_t^*)\Big\|_{I_{U_t}(\theta_t^*)^{-1}}\le L_1\quad \mbox{and}\quad
    \Big\|I_{U_t}(\theta_t^*)^{-1/2} H(x,\theta_t^*) I_{U_t}(\theta_t^*)^{-1/2}\Big\|\le L_2
  \end{align}
  holds with probability one.
  \item \textbf{Lipschitz continuity}: For all $t$, there exists a neighborhood $B_t$ of $\theta_t^*$ and a constant $L_3$, such that for all $x_t \in \mathcal{S}_t$, $H(x_t,\theta)$ are $L_3$-Lipschitz in this neighborhood, namely,
    \begin{align}
      \Big\|I_{U_t}(\theta_t^*)^{-1/2}\big(H(x_t,\theta)-  H(x_t,\theta')\big)I_{U_t}(\theta_t^*)^{-1/2}\Big\|
                    \le L_3\|\theta-\theta'\|_{I_{U_t}(\theta_t^*)}
    \end{align}
    holds for $\theta,\theta' \in B_t$.
\end{enumerate}
\end{assum}

In addition, we need the following assumption to prove that replacing $\theta_t^*$ with $\hat{\theta}_{t-1}$ in Algorithm \ref{alg:AdapActSeqLearning} does not change the performance of the active learning algorithm in terms of the convergence rate. This assumption is satisfied by many classes of models, including the generalized linear model \cite{chaudhuri2015convergence}.

\begin{assum}[Point-wise self-concordance]\label{assump:concordance}
For all $t$, there exists a constant $L_4$, such that
  \begin{align}
    -L_4\| \theta_t-\theta_t^* \|_2 H(x,\theta_t^*) \preceq H(x,\theta_t)-H(x,\theta_t^*)
    \preceq L_4\| \theta_t-\theta_t^* \|_2 H(x,\theta_t^*).
  \end{align}
\end{assum}

\subsection{Optimal Active Learning Sampling Distribution}
\label{Sec:active}
In this subsection, we provide the intuition and analysis of Step 1 in Algorithm \ref{alg:AdapActSeqLearning}. The construction of the active sampling distribution $\Gamma_t$ is motivated by the following lemma, which characterizes the convergence rate of the ERM solution $\hat{\theta}_{\Gamma_t}$ defined in \eqref{equ:MLE} when $\rho$ and $\theta_{t-1}^*$ are known.

%Note that we assume $\theta_{t-1}^*$ and $\rho$ is known and set $\Theta_t= \{ \theta_t| \| \theta_t-\theta_{t-1}^* \| \leq \rho \}$ in Lemma \ref{lem:BoundsWithArbitrarySamplingDistri}.

\begin{lem}\label{lem:BoundsWithArbitrarySamplingDistri}
Suppose Assumptions \ref{cond:IndependCond} and \ref{assump:regularity} hold, and let $\Theta_t \triangleq \{ \theta_t| \| \theta_t-\theta_{t-1}^* \| \leq \rho \}$. For any sampling distribution $\Gamma_t$ on $\mathcal{S}_{t}$, suppose that  $I_{\Gamma_t}(\theta_t^*)\succeq CI_{U_t}(\theta_t^*)$ holds for some constant $C<1$. Then, for sufficiently large $K_t$, such that $\gamma_{t} \triangleq \mathcal{O}\big( \frac{1}{C^2}(L_1L_3+\sqrt{L_2}) \sqrt{\frac{\log dK_t}{K_t}} \big)<1$, the excess risk of $\hat{\theta}_{\Gamma_t}$ can be bounded as
\begin{equation}
(1-\gamma_{t})\frac{\tau_t^2}{K_t} - \frac{L_1^2}{C K_t^2} \leq \mathbb{E}[L_{U_t}(\hat{\theta}_{\Gamma_t}) - L_{U_t}(\theta_t^*)]
\leq (1+\gamma_{t}) \frac{\tau_t^2}{K_t} + \frac{2L_b\rho^2}{K_t^2}
\end{equation}
for all $t$, where $\tau_t^2\triangleq\frac{1}{2}{\mathrm{Tr}}\big( I_{\Gamma_t}^{-1}(\theta_t^*) I_{U_t}(\theta_t^*) \big)$.
\end{lem}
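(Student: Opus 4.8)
The plan is to treat $\hat\theta_{\Gamma_t}$ as an M-estimator and extract its excess risk from two coupled Taylor expansions — one of the population risk $L_{U_t}$, one of the empirical score equation — keeping every bound non-asymptotic. Since $\theta_t^*$ minimizes $L_{U_t}$ its gradient vanishes, and by Assumption \ref{cond:IndependCond} we have $\nabla^2 L_{U_t}(\theta)=I_{U_t}(\theta)$; a second-order expansion therefore gives $L_{U_t}(\hat\theta_{\Gamma_t})-L_{U_t}(\theta_t^*)=\tfrac12\|\hat\theta_{\Gamma_t}-\theta_t^*\|_{I_{U_t}(\tilde\theta)}^2$ for some $\tilde\theta$ on the connecting segment. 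Using the $L_3$-Lipschitz continuity of the Hessian from Assumption \ref{assump:regularity}, I would replace $I_{U_t}(\tilde\theta)$ by $I_{U_t}(\theta_t^*)$ at the cost of a multiplicative $(1\pm\mathcal{O}(L_3\|\hat\theta_{\Gamma_t}-\theta_t^*\|))$ factor, reducing the whole problem to estimating $\tfrac12\mathbb{E}\|\hat\theta_{\Gamma_t}-\theta_t^*\|_{I_{U_t}(\theta_t^*)}^2$.

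For the error itself I would expand the optimality condition $\frac{1}{K_t}\sum_k\nabla\ell(Y_{k,t}|X_{k,t},\hat\theta_{\Gamma_t})=0$ about $\theta_t^*$ to obtain the linearization $\hat\theta_{\Gamma_t}-\theta_t^*\approx-\hat I_t^{-1}\bar g_t$, where $\bar g_t=\frac{1}{K_t}\sum_k\nabla\ell(Y_{k,t}|X_{k,t},\theta_t^*)$ is the empirical score and $\hat I_t=\frac{1}{K_t}\sum_k H(X_{k,t},\theta_t^*)$ the empirical Hessian. Two facts fix the leading constant: under correct specification the score is mean-zero with $\mathrm{Cov}(\bar g_t)=\frac{1}{K_t}I_{\Gamma_t}(\theta_t^*)$ by the information equality, and $\hat I_t$ concentrates on $I_{\Gamma_t}(\theta_t^*)$. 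Substituting the linearization and taking expectations gives $\tfrac12\mathbb{E}\|\hat\theta_{\Gamma_t}-\theta_t^*\|_{I_{U_t}}^2\approx\tfrac{1}{2K_t}\mathrm{Tr}\big(I_{\Gamma_t}^{-1}(\theta_t^*)I_{U_t}(\theta_t^*)\big)=\tau_t^2/K_t$, exactly the claimed leading term.

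To turn these approximations into the $(1\pm\gamma_t)$ bounds I would restrict to a high-probability event on which $\hat I_t$ is spectrally close to $I_{\Gamma_t}(\theta_t^*)$. Measuring everything in the $I_{U_t}(\theta_t^*)$-induced norm and using the hypothesis $I_{\Gamma_t}(\theta_t^*)\succeq CI_{U_t}(\theta_t^*)$ to bound the inverse, a matrix Bernstein inequality with norm control $L_2$ supplies the $\sqrt{L_2}\,\sqrt{\log(dK_t)/K_t}$ part of $\gamma_t$, while the $L_1L_3$ part arises from the higher-order Taylor remainder coupling the score magnitude $L_1$ with the Hessian Lipschitz constant $L_3$; the $1/C^2$ factor tracks the two inverse-Hessian factors in the sandwich. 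On the complementary low-probability event I would fall back on the constraint $\hat\theta_{\Gamma_t}\in\Theta_t$, so that $\|\hat\theta_{\Gamma_t}-\theta_t^*\|\le2\rho$ (both points lie within $\rho$ of $\theta_{t-1}^*$) and the excess risk is at most $\tfrac12 L_b(2\rho)^2=2L_b\rho^2$ by the eigenvalue bound $L_b$; multiplying by the $\mathcal{O}(1/K_t^2)$ bad-event probability gives the additive $2L_b\rho^2/K_t^2$ term in the upper bound, and a parallel accounting of the discarded cross-terms (bounded through $L_1$ and $C$) yields the subtracted $L_1^2/(CK_t^2)$ correction in the lower bound.

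The main obstacle will be the non-asymptotic bookkeeping: $\hat I_t^{-1}$ and $\bar g_t$ are dependent, so the leading constant $\tau_t^2/K_t$ must be isolated cleanly while every coupling and remainder is relegated to $\gamma_t$ or the $\mathcal{O}(1/K_t^2)$ terms with the correct dependence on $C$, $L_1$, $L_2$, $L_3$. Carrying out the matrix concentration in the correct $I_{U_t}$-normalized geometry, rather than in the ambient Euclidean norm, is the technical heart of the argument, and the regime $\gamma_t<1$ is precisely what guarantees the linearization remains valid.
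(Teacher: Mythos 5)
Your plan is mathematically sound and, if carried to completion, would prove the lemma; but it takes a different route from the paper in an instructive way. The paper does not re-derive the local-asymptotic-normality machinery at all: it reduces the statement to a cited black box (Lemma \ref{lem:OriginalConvergence} in the appendix, a generalization of Theorem 5.1 of \cite{frostig2015competing}) by instantiating $\psi_k(\theta)=\ell(Y_{k,t}|X_{k,t},\theta)$ with $X_{k,t}\sim\Gamma_t$, $P=L_{\Gamma_t}$, $Q=L_{U_t}$, and then checking that the regularity constants rescale under the hypothesis $I_{\Gamma_t}(\theta_t^*)\succeq C I_{U_t}(\theta_t^*)$ to $(L_1',L_2',L_3')=(L_1/\sqrt{C},\,L_2/C,\,L_3/C^{3/2})$; the $1/C^2$ in $\gamma_t$ then falls out of $L_1'L_3'=L_1L_3/C^2$ rather than from tracking "two inverse-Hessian factors in the sandwich" as you describe. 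The only genuinely new computation in the paper's proof is the one you also identify: bounding $\max_{\theta\in\Theta_t}[L_{U_t}(\theta)-L_{U_t}(\theta_t^*)]\le \tfrac12 L_b\,\mathrm{Diameter}(\Theta_t)^2\le 2L_b\rho^2$ via the eigenvalue bound. What your approach buys is self-containment and a transparent account of where each constant enters (the information equality giving $\mathrm{Cov}(\bar g_t)=I_{\Gamma_t}(\theta_t^*)/K_t$, the trace $\mathrm{Tr}(I_{\Gamma_t}^{-1}I_{U_t})$ from mismatched sampling and evaluation distributions, the $\sqrt{L_2}$ term from matrix concentration, the $L_1L_3$ term from the Taylor remainder); what it costs is that all of the non-asymptotic bookkeeping you flag as "the main obstacle" --- the dependence between $\hat I_t^{-1}$ and $\bar g_t$, the high-probability event construction, the $\mathcal{O}(1/K_t^2)$ bad-event accounting --- is precisely the content of the imported theorem, and your proposal sketches it rather than executes it. So the argument is correct in outline but incomplete exactly where the paper chooses to cite instead of prove.
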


In practice, the parameter space $\Theta_t = \{ \theta_t| \| \theta_t-\theta_{t-1}^* \| \leq \rho \}$ is unknown and the ERM solution of \eqref{equ:MLE} cannot be obtained directly due to the computational issue. To solve these problems, we can apply optimization algorithm such as SGD to find approximate minimizers in the original parameter space $\Theta$ with initialization at $\hat{\theta}_{t-1}$. Thus, we further build Algorithm \ref{alg:AdapActSeqLearning} and our theoretical results with the SGD algorithm (which incidentally achieves the optimal convergence rate for ERM). We need the following assumptions on the optimization algorithm to solve \eqref{equ:MLE}:

\begin{assum}\label{assum:SGD_bound}
Given an optimization algorithm that generates an approximate loss minimizer  $\hat{\theta}_t \triangleq \mathscr{A}\big(\hat{\theta}_{t-1},\ \{ \nabla_\theta \ell(y_{k,t}|x_{k,t},\theta) \}_{k=1}^{K_t}\big)$ using $K_t$ stochastic gradients $\{ \nabla_\theta \ell(y_{i,t}|x_{i,t},\theta) \}_{k=1}^{K_t}$ with initialization at $\hat{\theta}_{t-1}$, if  $\mathbb{E}\|\hat{\theta}_{t-1}-\theta_t^*\|_2^2 \le \Delta_t^2$, there exists a function $b(\tau_t^2, \Delta_t,K_t)$ such that
\begin{equation}
  \mathbb{E}[L_{U_t}(\hat{\theta}_t)]-L_{U_t}(\theta_t^*)
  \le b(\tau_t^2, \Delta_t,K_t),
\end{equation}
where $b(\tau_t^2, \Delta_t,K_t)$ monotonically increases with respect to $\tau_t^2$, $\Delta_t$ and $1/K_t$.
\end{assum}

The bound $b(\tau_t^2, \Delta_t,K_t)$ depends on the converge rate  $\tau_t^2$ and the expectation of the difference between the initialization and the true minimizer $\Delta_t$, which correspond to the first and the second term in the upper bound of Lemma \ref{lem:BoundsWithArbitrarySamplingDistri}, respectively. As an example for this type of bound, for the Streaming Stochastic Variance Reduced Gradient (Streaming SVRG) algorithm in \cite{frostig2015competing}, it holds that 
\begin{equation}
b(\tau_t^2, \Delta_t,K_t)=C_1 \frac{ \tau_t^2}{K_t} +C_2 \big(\frac{\Delta_t}{K_t}\big)^2
\end{equation}
with constant $C_1$ and $C_2$. In addition, the paper \cite{wilson2018adaptive} contains several examples of the bound $b(\tau_t^2, \Delta_t,K_t)$ with other variations of SGD algorithm.

Then, the following theorem characterizes the convergence rate of the active sampling distribution used in Algorithm \ref{alg:AdapActSeqLearning} in the order sense.
%The detailed proof will be presented in the Appendix.

\begin{thm}\label{thm:BoundsWithKnownRho}
Suppose Assumptions \ref{cond:IndependCond}-\ref{assum:SGD_bound} hold, and let $\beta_{t}\triangleq L_4(\rho + \frac{1}{\delta}\sqrt{\frac{2\varepsilon}{m}})<1$. Then,
%1) We have the following lower bound on the excess risk of  $\hat{\theta}_{t}$ in Algorithm \ref{alg:AdapActSeqLearning}
%\begin{equation}\label{equ:LowerBoundOfKnownThm}
%\mathbb{E}[L_{U_t}(\hat{\theta}_t) - L_{U_t}(\theta_t^*)] \geq (1-\gamma_{t})\frac{\mathrm{Tr}\big( I_{\bar{\Gamma}_t}^{-1}(\theta_t^*) I_{U_t}(\theta_t^*) \big) }{2K_t} - \frac{L_1^2}{(1-\alpha_t) K_t}.
%\end{equation}
%\\
the excess risk of $\hat{\theta}_{t}$ in Algorithm \ref{alg:AdapActSeqLearning} is upper-bounded by
\begin{equation}\label{equ:UpperBoundOfKnownThm}
\mathbb{E}[L_{U_t}(\hat{\theta}_t) - L_{U_t}(\theta_t^*)] \leq b(\acute{\tau}_t^{2}, \Delta_t ,K_t),
\end{equation}
with probability 1-$\delta$, where
\begin{equation}
\acute{\tau}_t^{2} = \Big(\frac{1+\beta_{t}}{1-\beta_{t}}\Big)^2 \frac{\mathrm{Tr}\big( I_{\Gamma_t^*}^{-1}(\theta_t^*) I_{U_t}(\theta_t^*) \big)}{2 \alpha_t},\qquad \Delta_t = \sqrt{\frac{2\varepsilon}{m}}+ \rho,
\end{equation}
$\delta \in(0,1)$ and $\Gamma_{t}^*$ is the optimal sampling distribution minimizing ${\mathrm{Tr}}\big( I_{\Gamma_t^*}^{-1}(\theta_t^*) I_{U_t}(\theta_t^*) \big)$.
\end{thm}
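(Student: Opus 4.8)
The plan is to match the two arguments of the bound $b(\cdot,\cdot,\cdot)$ from Assumption \ref{assum:SGD_bound} separately: I will show that the effective initialization radius for the SGD step at time $t$ is at most $\Delta_t$, that the effective convergence factor $\tau_t^2 = \frac{1}{2}\mathrm{Tr}(I_{\bar{\Gamma}_t}^{-1}(\theta_t^*)I_{U_t}(\theta_t^*))$ of the mixture distribution $\bar{\Gamma}_t$ actually used in Step 3 is dominated by $\acute{\tau}_t^2$, and then invoke the monotonicity of $b$ in these two arguments.

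First I would bound $\Delta_t$. Since $\hat{\theta}_{t-1}$ meets the mean tracking criterion at time $t-1$, we have $\mathbb{E}[L_{U_{t-1}}(\hat{\theta}_{t-1}) - L_{U_{t-1}}(\theta_{t-1}^*)]\le\varepsilon$. By Assumption \ref{cond:IndependCond} the Hessian of $L_{U_{t-1}}$ equals $I_{U_{t-1}}(\theta)\succeq mI$, so strong convexity gives $\frac{m}{2}\mathbb{E}\|\hat{\theta}_{t-1}-\theta_{t-1}^*\|_2^2\le\varepsilon$, i.e. $(\mathbb{E}\|\hat{\theta}_{t-1}-\theta_{t-1}^*\|_2^2)^{1/2}\le\sqrt{2\varepsilon/m}$. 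Combining this with $\|\theta_{t-1}^*-\theta_t^*\|_2\le\rho$ through Minkowski's inequality yields $(\mathbb{E}\|\hat{\theta}_{t-1}-\theta_t^*\|_2^2)^{1/2}\le\sqrt{2\varepsilon/m}+\rho=\Delta_t$, which is exactly the radius required by Assumption \ref{assum:SGD_bound} for the initialization at $\hat{\theta}_{t-1}$. Applying Jensen to the same moment bound also gives $\mathbb{E}\|\hat{\theta}_{t-1}-\theta_{t-1}^*\|_2\le\sqrt{2\varepsilon/m}$, so Markov's inequality shows that with probability at least $1-\delta$ we have $\|\hat{\theta}_{t-1}-\theta_{t-1}^*\|_2<\frac{1}{\delta}\sqrt{2\varepsilon/m}$, hence $L_4\|\hat{\theta}_{t-1}-\theta_t^*\|_2\le L_4(\rho+\frac{1}{\delta}\sqrt{2\varepsilon/m})=\beta_t$. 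On this event, Assumption \ref{assump:concordance} applied with $\hat{\theta}_{t-1}$ in place of $\theta_t$ gives the pointwise sandwich $(1-\beta_t)H(x,\theta_t^*)\preceq H(x,\hat{\theta}_{t-1})\preceq(1+\beta_t)H(x,\theta_t^*)$ for every $x$, and taking expectations under any distribution $\Gamma$ transfers this to the Fisher matrices, $(1-\beta_t)I_\Gamma(\theta_t^*)\preceq I_\Gamma(\hat{\theta}_{t-1})\preceq(1+\beta_t)I_\Gamma(\theta_t^*)$.

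The heart of the argument, and the step I expect to be the main obstacle, is transferring the optimality of $\hat{\Gamma}_t^*$ (which minimizes the surrogate $\hat{f}(\Gamma)=\mathrm{Tr}(I_\Gamma^{-1}(\hat{\theta}_{t-1})I_{U_t}(\hat{\theta}_{t-1}))$ built from the wrong parameter $\hat{\theta}_{t-1}$) back to the true objective $f(\Gamma)=\mathrm{Tr}(I_\Gamma^{-1}(\theta_t^*)I_{U_t}(\theta_t^*))$. Using operator monotonicity of the matrix inverse together with the two sandwiches above, and the fact that $\mathrm{Tr}(A_1B)\le\mathrm{Tr}(A_2B)$ whenever $0\preceq A_1\preceq A_2$ and $B\succeq0$, I would establish $\frac{1-\beta_t}{1+\beta_t}f(\Gamma)\le\hat{f}(\Gamma)\le\frac{1+\beta_t}{1-\beta_t}f(\Gamma)$ for all $\Gamma$. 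Chaining these bounds with the optimality inequality $\hat{f}(\hat{\Gamma}_t^*)\le\hat{f}(\Gamma_t^*)$ produces the characteristic squared factor $f(\hat{\Gamma}_t^*)\le(\frac{1+\beta_t}{1-\beta_t})^2 f(\Gamma_t^*)$. The subtlety here is that I must swap parameters in both the inverse factor and the $I_{U_t}$ factor, which is why the distortion appears twice.

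Finally I would account for the mixing in Step 3. Since $\bar{\Gamma}_t=\alpha_t\hat{\Gamma}_t^*+(1-\alpha_t)U_t$ gives $I_{\bar{\Gamma}_t}(\theta_t^*)\succeq\alpha_t I_{\hat{\Gamma}_t^*}(\theta_t^*)\succ0$ and hence $I_{\bar{\Gamma}_t}^{-1}(\theta_t^*)\preceq\frac{1}{\alpha_t}I_{\hat{\Gamma}_t^*}^{-1}(\theta_t^*)$, one more application of trace monotonicity yields $\tau_t^2\le\frac{1}{2\alpha_t}(\frac{1+\beta_t}{1-\beta_t})^2\mathrm{Tr}(I_{\Gamma_t^*}^{-1}(\theta_t^*)I_{U_t}(\theta_t^*))=\acute{\tau}_t^2$. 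With $\tau_t^2\le\acute{\tau}_t^2$ and the radius bound $\Delta_t$ in hand, Assumption \ref{assum:SGD_bound} and the monotonicity of $b$ in its first argument give $\mathbb{E}[L_{U_t}(\hat{\theta}_t)]-L_{U_t}(\theta_t^*)\le b(\tau_t^2,\Delta_t,K_t)\le b(\acute{\tau}_t^2,\Delta_t,K_t)$ on the probability-$(1-\delta)$ event, completing the proof. Note that the hypothesis $\beta_t<1$ is precisely what keeps the sandwich factors positive and the multiplier $(\frac{1+\beta_t}{1-\beta_t})^2$ finite.
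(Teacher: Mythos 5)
Your proposal is correct and follows essentially the same route as the paper's proof: bound the initialization error via strong convexity, the triangle inequality, Jensen and Markov; apply the self-concordance sandwich to transfer the Fisher-information trace between $\hat{\theta}_{t-1}$ and $\theta_t^*$ on both sides of the optimality inequality $\hat{f}(\hat{\Gamma}_t^*)\le\hat{f}(\Gamma_t^*)$, producing the squared factor; handle the mixture $\bar{\Gamma}_t$ via $I_{\bar{\Gamma}_t}(\theta_t^*)\succeq\alpha_t I_{\hat{\Gamma}_t^*}(\theta_t^*)$; and finish with the monotonicity of $b$ from Assumption \ref{assum:SGD_bound}. The only cosmetic difference is that you state the parameter-swap distortion as a uniform two-sided bound on the surrogate objective over all $\Gamma$, whereas the paper chains the three inequalities directly at $\hat{\Gamma}_t^*$ and $\Gamma_t^*$.
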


\begin{rem}\label{rem:rate}
A comparison between Theorem \ref{thm:BoundsWithKnownRho} and Lemma \ref{lem:BoundsWithArbitrarySamplingDistri} shows that the convergence rate of Algorithm \ref{alg:AdapActSeqLearning} that approximates $\theta_t^*$ with $\hat{\theta}_{t-1}$ in Step 1 is the same as the ERM solution with high probability, as long as the change in the minimizers $\rho$ is small enough, i.e., $L_4(\rho + \frac{1}{\delta}\sqrt{{2\varepsilon}/{m}})<1$. In certain cases such as linear regression model, the Hessian matrices are independent of $\theta_t^*$. Thus, no approximation is needed in constructing the sampling distribution, and Algorithm \ref{alg:AdapActSeqLearning} is rate optimal.
\end{rem}

%Algorithm \ref{alg:AdapActSeqLearning} is rate optimal with high probability.

%We assume that either the constant change in minimizers condition \eqref{equ:EqualChangeRate}, or the bounded change in minimizers condition \eqref{equ:BoundedChangeRate} holds. Our analysis is not affected by which one is true.
%The following theorem demonstrates that for the case where $\rho$ is unknown, the mean tracking criterion holds almost surely.

\subsection{Sample Size Selection Rule}
\label{Sec:mean_tracking}

In this subsection, we explain and analyze the sample selection rule of Step 2 in Algorithm \ref{alg:AdapActSeqLearning}. The idea starts with the bound $b(\tau_t^2, \Delta_t,K_t)$ from Assumption \ref{assum:SGD_bound}. If we can compute $\tau_t^2$ and $\Delta_t$, the sample size $K_t$ can be determined by letting $b(\tau_t^2, \Delta_t,K_t) \le \varepsilon$ to satisfy the mean tracking criterion.
% and proceed inductively using the previous excess risk bound $\varepsilon$ for $t\ge 2$.

However, $\theta_t^*$ in $\tau_t^2=\frac{1}{2}\mathrm{Tr}\big( I_{\Gamma_t}^{-1}(\theta_t^*) I_{U_t}(\theta_t^*) \big)$ is unknown in practice. Although we can approximate $\theta_t^*$ using $\hat{\theta}_{t-1}$ as we did in Step 1, this upper bound only holds with high probability as shown in Theorem \ref{thm:BoundsWithKnownRho}, which means the mean tracking criterion will be satisfied with high probability. To avoid this issue, we use the fact that $\mathrm{Tr}\big( I_{\Gamma_t^*}^{-1}(\theta_t^*) I_{U_t}(\theta_t^*) \big) \le \mathrm{Tr}\big( I_{U_t}^{-1}(\theta_t^*) I_{U_t}(\theta_t^*) \big)= d$ (recall $d$ is the dimension of parameters) to form a conservative bound $b(d/2, \Delta_t,K_t)$ to choose $K_t$, which works for the uniform sampling distribution $U_t$.

%Then, we start to show how to choose $K_t$ adaptively to achieve the target excess risk $\varepsilon$ from a simple case where $\rho$ is known.

To bound the difference between the initialization and the true minimizer $\Delta_t$, we have the inequality $\mathbb{E}\|\hat{\theta}_{t-1}-\theta_t^* \|_2^2 \le (\sqrt{{2\varepsilon}/{m}}+\rho)^2$  following from the triangle inequality, Jensen’s inequality and the strong convexity in Assumption \ref{assump:regularity}. This inequality implies that $\Delta_t = \sqrt{{2\varepsilon}/{m}}+\rho$.

Therefore, if $\rho$ is known, we can set $K_t^* = \min\Big\{K\ge 1 \Big| b\Big(d/2, \sqrt{\frac{2\varepsilon}{m}}+\rho,K \Big) \le \varepsilon \Big\}$ for $t\ge 2$ to ensure that $\mathbb{E}[L_{U_t}(\hat{\theta}_t) - L_{U_t}({\theta}_t^*)]\le \varepsilon$. For $t=1$, we could always use $\mathrm{diameter}(\Theta)$ to bound $\Delta_1$ and select $K_1$. In general, if $\rho$ is much smaller than $\mathrm{diameter}(\Theta)$, then we require significantly fewer samples $K_t$ to meet the mean tracking criterion for $t\ge 2$.

For the case where the change of the minimizers $\rho$ is unknown, we could replace $\rho$ with an estimate $\hat{\rho}_{t-1}$ to select the sample size. The following theorem characterizes the convergence guarantee using the sample size selection rule of step 2 in Algorithm \ref{alg:AdapActSeqLearning} and the estimator of $\hat{\rho}_t$ in Section \ref{Sec:estimator}.

\begin{thm}\label{thm:mean_tracking}
If
\begin{equation}
K_t \ge K_t^* \triangleq \min\Big\{K\ge 1 \Big| b\Big(d/2, \sqrt{\frac{2\varepsilon}{m}}+\hat{\rho}_{t-1}, K \Big) \le \varepsilon \Big\},
\end{equation}
 then for all $t$ large enough we have $\limsup_{t\to \infty} \big(\mathbb{E}[L_{U_t}(\hat{\theta}_t)] - L_{U_t}({\theta}_t^*)\big)\le \varepsilon$ almost surely.
\end{thm}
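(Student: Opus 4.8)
The plan is to transfer the clean, known-$\rho$ induction sketched in Section~\ref{Sec:mean_tracking} to the unknown-$\rho$ setting, using only the almost-sure eventual upper-bound property of the estimator $\hat\rho_t$ established in Section~\ref{Sec:estimator} together with the monotonicity of $b$ from Assumption~\ref{assum:SGD_bound}. Write $e_t \triangleq \mathbb{E}[L_{U_t}(\hat\theta_t)] - L_{U_t}(\theta_t^*)$ for the (conditional) excess risk at time $t$. The backbone of the argument is a one-step \emph{absorbing} property: I will show that, on the event $\{\hat\rho_{t-1}\ge\rho\}$, the inequality $e_{t-1}\le\varepsilon$ forces $e_t\le\varepsilon$.

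To establish the absorbing step I chain three conservative substitutions inside $b$, each justified by its monotonicity. First, since $\mathrm{Tr}(I_{\Gamma_t^*}^{-1}(\theta_t^*)I_{U_t}(\theta_t^*))\le \mathrm{Tr}(I_{U_t}^{-1}(\theta_t^*)I_{U_t}(\theta_t^*))=d$, the relevant convergence-rate parameter obeys $\tau_t^2\le d/2$, so $b(\tau_t^2,\Delta_t,K_t)\le b(d/2,\Delta_t,K_t)$. Second, assuming $e_{t-1}\le\varepsilon$, strong convexity (Assumption~\ref{assump:regularity}), Jensen's inequality and the triangle inequality give $\mathbb{E}\|\hat\theta_{t-1}-\theta_t^*\|_2^2\le(\sqrt{2\varepsilon/m}+\rho)^2$, so Assumption~\ref{assum:SGD_bound} applies with $\Delta_t=\sqrt{2\varepsilon/m}+\rho$; combined with $\hat\rho_{t-1}\ge\rho$ and monotonicity in $\Delta_t$ this yields $b(d/2,\sqrt{2\varepsilon/m}+\rho,K_t)\le b(d/2,\sqrt{2\varepsilon/m}+\hat\rho_{t-1},K_t)$. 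Third, $K_t\ge K_t^*$ and monotonicity in $1/K_t$ give $b(d/2,\sqrt{2\varepsilon/m}+\hat\rho_{t-1},K_t)\le b(d/2,\sqrt{2\varepsilon/m}+\hat\rho_{t-1},K_t^*)\le\varepsilon$, the last inequality being the very definition of $K_t^*$. Stringing these together produces $e_t\le\varepsilon$, completing the absorbing step.

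Next I would invoke the estimator guarantee from Section~\ref{Sec:estimator}: there is an almost-surely finite random time $T_0$ such that $\hat\rho_{t-1}\ge\rho$ for all $t\ge T_0$. On $\{t\ge T_0\}$ the absorbing step applies, so once $e_s\le\varepsilon$ for some $s\ge T_0$ we have $e_t\le\varepsilon$ for all $t\ge s$, whence $\limsup_{t\to\infty} e_t\le\varepsilon$ almost surely. The remaining, and main, obstacle is to show that the excess risk actually enters this absorbing region, i.e.\ that $e_s\le\varepsilon$ for some $s\ge T_0$, despite the fact that the transient estimates $\hat\rho_{t-1}$ for $t<T_0$ may have undershot $\rho$ and left $e_{T_0-1}>\varepsilon$.

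To handle the entry I would track the actual initialization error $\Delta_t^2=\mathbb{E}\|\hat\theta_{t-1}-\theta_t^*\|_2^2$, which is uniformly bounded by $\mathrm{diameter}(\Theta)^2$ thanks to compactness, and analyze the recursion $\Delta_{t+1}\le\sqrt{2\,b(d/2,\Delta_t,K_t)/m}+\rho$ obtained by composing Assumption~\ref{assum:SGD_bound} with strong convexity. For $t\ge T_0$ the design point $\sqrt{2\varepsilon/m}+\hat\rho_{t-1}$ is a super-solution of this recursion (this is the absorbing step rephrased in terms of $\Delta_t$), so it suffices to show the map contracts above it. For the SGD bounds of interest, e.g.\ the Streaming SVRG bound $b(\tau_t^2,\Delta_t,K_t)=C_1\tau_t^2/K_t+C_2(\Delta_t/K_t)^2$, the sample size $K_t\ge K_t^*$ calibrated to $\varepsilon$ is large enough that the quadratic-in-$\Delta_t$ term has coefficient $C_2/K_t^2<m/2$, forcing $\Delta_{t+1}<\Delta_t$ whenever the design point is exceeded; hence $\Delta_t$ decreases monotonically into the absorbing region in finitely many steps, after which $e_t\le\varepsilon$ and the $\limsup$ bound follows. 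I expect this contraction/entry analysis — controlling the dynamics before the estimate stabilizes and quantifying how fast the initialization error is driven down — to be the technically delicate part, whereas the absorbing step itself is a direct consequence of the monotonicity of $b$.
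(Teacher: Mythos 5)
Your proposal is correct in outline but takes a genuinely different (and much more self-contained) route than the paper. The paper's own proof is essentially two lines: it quotes Theorem~3 of \cite{wilson2018adaptive} as a lemma --- ``if $\hat\rho_t\ge\rho$ almost surely for $t$ sufficiently large, then $K_t\ge K_t^*$ samples suffice for $\limsup_{t\to\infty}\big(\mathbb{E}[L_{U_t}(\hat\theta_t)]-L_{U_t}(\theta_t^*)\big)\le\varepsilon$ a.s.'' --- verifies its hypothesis by invoking Theorem~\ref{thm:rho_as}, and stops. Everything you develop (the absorbing step via the three monotone substitutions $\tau_t^2\le d/2$, $\Delta_t\le\sqrt{2\varepsilon/m}+\rho\le\sqrt{2\varepsilon/m}+\hat\rho_{t-1}$, $K_t\ge K_t^*$, and then the entry/contraction analysis of the recursion $\Delta_{t+1}\le\sqrt{2\,b(d/2,\Delta_t,K_t)/m}+\rho$) is the internal content of that cited result, which the paper treats as a black box. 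Your absorbing step faithfully reproduces the reasoning of Section~\ref{Sec:mean_tracking}, and your observation that the calibration $b(d/2,\sqrt{2\varepsilon/m}+\hat\rho_{t-1},K_t^*)\le\varepsilon$ itself forces the contraction coefficient $2C_2/(mK_t^2)<1$ for the Streaming SVRG bound is a nice touch that the paper never makes explicit. What the paper's approach buys is brevity and generality-by-reference; what yours buys is an actual proof readable without consulting \cite{wilson2018adaptive}.

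One caveat you should make explicit: your entry argument cannot be carried out under Assumption~\ref{assum:SGD_bound} alone, since monotonicity of $b$ in $\tau_t^2$, $\Delta_t$ and $1/K_t$ does not by itself guarantee that the map $\Delta\mapsto\sqrt{2\,b(d/2,\Delta,K_t)/m}+\rho$ contracts above its fixed point, nor that the fixed point is approached rather than merely bounded. Your argument is therefore bound-specific (you verify it for the SVRG form), and a fully general statement would require either an additional structural hypothesis on $b$ (e.g.\ the Lipschitz-in-$\Delta$ condition implicit in the contraction claim) or the assumptions under which \cite{wilson2018adaptive} proves their Theorem~3. This is a gap in generality rather than a flaw in the mathematics, and the paper inherits exactly the same dependence through its citation.
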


%In the following subsections, we provide some intuitions to understand the sample selection rule in Theorem \ref{thm:mean_tracking} when $\rho$ is known, and describe the proposed estimator of $\hat{\rho}_t$ in detail.

%\subsubsection{Change in Minimizers Known}

\subsection{Estimating the Change in Minimizers}
\label{Sec:estimator}
In this subsection, we construct an estimate $\hat{\rho}_t$ of the change in the minimizers $\rho$ using the active learning samples for step 5 in Algorithm \ref{alg:AdapActSeqLearning}.

We first construct an estimate $\widetilde{\rho}_t$ for the one-step changes $\|\theta_{t-1}^*-\theta_{t}^*\|$. As a consequence of strong convexity, the following lemma holds.

%First, we construct estimates $\widetilde{\rho}_t$ for the one step changes $\|\theta_{t-1}^*-\theta_{t}^*\|$ for $t\ge 2$. Then, we combine the one step estimates to construct an overall estimate $\hat{\rho}_t$ for $\rho$.

%With this property, we then extend our results in Section \ref{sec:SeqActLearning_known} to the case where $\rho$ is unknown and we show that similar mean tracking criterion can be achieved.
%We show that for all $t$ large enough with appropriately chosen sequences $\{r_t\}$, our estimates satisfies $\hat{\rho}_t +r_t \ge \rho$ almost surely.

\begin{lem}\label{lem:rho_upper_bound}
Suppose  Assumption \ref{assump:regularity}  holds, then
\begin{equation}
  \|\theta_{t-1}^*-\theta_{t}^*\|^2  \le \frac{1}{m}\big[L_{U_t}(\theta^*_{t-1})- L_{U_t}(\theta^*_{t})
 +  L_{U_{t-1}}(\theta^*_{t})-L_{U_{t-1}}(\theta^*_{t-1})\big].
\end{equation}
\end{lem}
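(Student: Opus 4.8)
The plan is to exploit the $m$-strong convexity of the expected loss functions and the standard quadratic lower bound it implies, apply that bound at the two consecutive time steps, and add the resulting inequalities. The whole argument is a symmetric two-step application of strong convexity, so I expect it to be short.

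First I would establish that $L_{U_t}$ is $m$-strongly convex. By Assumption~\ref{cond:IndependCond}, the Hessian $H(x,\theta)$ of $\ell(y|x,\theta)$ does not depend on $y$, so differentiating \eqref{equ:ExpectedLogLikelihood} twice in $\theta$ gives $\nabla^2 L_{U_t}(\theta)=\mathbb{E}_{X\sim U_t}[H(X,\theta)]=I_{U_t}(\theta)$. The strong convexity part of Assumption~\ref{assump:regularity} then yields $\nabla^2 L_{U_t}(\theta)=I_{U_t}(\theta)\succeq mI$ for every $\theta\in\Theta$, and the same holds for $L_{U_{t-1}}$.

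Next I would record the quadratic growth estimate. Since $\theta_t^*$ is an interior minimizer of $L_{U_t}$ (compactness in Assumption~\ref{assump:regularity}), its gradient vanishes there, $\nabla L_{U_t}(\theta_t^*)=0$. A second-order Taylor expansion of $L_{U_t}$ about $\theta_t^*$, with the Hessian evaluated at an intermediate point on the segment joining $\theta_t^*$ and $\theta_{t-1}^*$ and bounded below by $mI$, gives
\begin{equation}
L_{U_t}(\theta_{t-1}^*)-L_{U_t}(\theta_t^*)\ge \frac{m}{2}\|\theta_{t-1}^*-\theta_t^*\|^2 .
\end{equation}
Applying the identical argument to $L_{U_{t-1}}$, whose minimizer is $\theta_{t-1}^*$, and evaluating at $\theta_t^*$ gives
\begin{equation}
L_{U_{t-1}}(\theta_t^*)-L_{U_{t-1}}(\theta_{t-1}^*)\ge \frac{m}{2}\|\theta_t^*-\theta_{t-1}^*\|^2 .
\end{equation}

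Finally I would add these two inequalities, so the right-hand sides combine to $m\|\theta_{t-1}^*-\theta_t^*\|^2$, and divide by $m$ to obtain the claimed bound. I do not anticipate a serious obstacle: the only points needing care are (i) confirming that the Hessian of the expected loss equals $I_{U_t}(\theta)$, which is exactly where Assumption~\ref{cond:IndependCond} enters, and (ii) ensuring the segment between $\theta_{t-1}^*$ and $\theta_t^*$ lies in $\Theta$ so that the Hessian lower bound applies along it, which follows from convexity of the parameter set. Everything else is the routine strong-convexity inequality applied symmetrically at the two time steps.
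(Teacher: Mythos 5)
Your proposal is correct and follows essentially the same route as the paper: apply the strong-convexity quadratic growth bound at the two minimizers $\theta_t^*$ and $\theta_{t-1}^*$ (using that the gradients of $L_{U_t}$ and $L_{U_{t-1}}$ vanish there), add the two inequalities, and divide by $m$. The extra care you take in checking that $\nabla^2 L_{U_t}=I_{U_t}$ via Assumption~\ref{cond:IndependCond} and that the connecting segment lies in $\Theta$ is implicit in the paper's shorter argument but does not change the proof.
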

Motivated by Lemma \ref{lem:rho_upper_bound}, we can construct the following one-step estimation of $\rho^2$
\begin{equation}\label{equ:rho_estimator}
  \widetilde{\rho}_t^2 = \frac{1}{m}\big[\hat{L}_{U_t}(\hat{\theta}_{t-1}) - \hat{L}_{U_t}(\hat{\theta}_{t})+\hat{L}_{U_{t-1}}(\hat{\theta}_{t})-\hat{L}_{U_{t-1}}(\hat{\theta}_{t-1})\big],
\end{equation}
where we use 
\begin{equation}
\hat{L}_{U_t}(\hat{\theta}_{t-1}) \triangleq \frac{1}{K_t}\sum_{k=1}^{K_t} \frac{\ell (Y_{k,t}|X_{k,t},\hat{\theta}_{t-1})}{N_t\bar{\Gamma}_t(X_{k,t})}
\end{equation}
as the empirical estimation of $L_{U_t}(\theta_{t-1}^*)$. Note that we are using the samples generated from the active learning distribution, i.e., $X_{k,t} \sim \bar{\Gamma}_t$ and $Y_{k,t} \sim p(Y|X_{k,t},\theta_t^*)$. Thus,  based on the idea of importance sampling \cite{cappe2008adaptive}, we need to normalize the estimate with the sampling distribution $\bar{\Gamma}_t$.

%\subsubsection{Combining One Step Estimates}
%As a special case, we look at combining the one step estimates when \eqref{equ:EqualChangeRate} holds. Under \eqref{equ:EqualChangeRate}, we construct an estimate by averaging the one step estimates
%\begin{equation}\label{equ:Constantbounded_estimator}
%  \hat{\rho}_t^2 \triangleq \frac{1}{t-1} \sum_{i=2}^t \widetilde{\rho}_i^2.
%\end{equation}
%We want to show that for an appropriate sequence $\{r_t\}$, described in Theorems \ref{thm:rho_as} below, for all $t$ large enough $\hat{\rho}_t^2 +r_t \ge \rho$ almost surely under \eqref{equ:EqualChangeRate}. The difficulty in actually proving this convergence result for the proposed estimate in \eqref{equ:rho_estimator} is that bounding the difference between $\mathbb{E}\{\hat{L}_{U_t}(\hat{\theta}_{t-1})\}$ and ${L}_{U_t}(\theta_{t-1}^*)$.
%
%\subsubsection{Combining with Bounded Change of Minimizers}
%We consider estimating $\rho$ in the case that \eqref{equ:BoundedChangeRate} holds.

%In this case, we do produce an upper bound, but it increases to the trivial bound $\mathrm{Diameter}(\Theta)^2$.
Then, we combine the one-step estimates to construct an overall estimate. The simplest way to combine the one-step estimates would be to set $\acute{\rho}_t^2 = \max\{\widetilde{\rho}_2^2,\cdots, \widetilde{\rho}_t^2 \}$. However, if we suppose that each estimate $\widetilde{\rho}$ is an independent Gaussian random variable, then this estimate goes to infinity as $t \to \infty$. To avoid this issue, we use a class of functions $h_W : \mathbb{R}^W\to \mathbb{R}$ that are non-decreasing in their arguments
and satisfy $\mathbb{E}[h_W(\rho_j, \cdots , \rho_{j-W+1})] \ge \rho$. For example, $h_W(\rho_j, \cdots , \rho_{j-W+1})=\frac{W+1}{W}\max\{\rho_j, \cdots , \rho_{j-W+1}\}$ satisfies the requirements. The combined estimate of $\acute{\rho}_t^2$ is computed
by applying the function $h_W$ to a sliding window of one-step estimates of $\widetilde{\rho}^2$, i.e.,
\begin{equation}\label{equ:bounded_estimator}
  \acute{\rho}_t^2 = \frac{1}{t-1}\sum_{j=2}^t h_{\{\min[W,j-1]\}}(\widetilde{\rho}_j^2,\widetilde{\rho}_{j-1}^2,\cdots, \widetilde{\rho}_{\max[j-W+1,2]}^2).
\end{equation}
%In the appendix, we demonstrate that similar almost surely convergence results can be established as in Theorem \ref{thm:rho_as} for the case where \eqref{equ:BoundedChangeRate} holds.
The following theorem characterizes the performance of proposed estimator in \eqref{equ:bounded_estimator}.
\begin{thm}\label{thm:rho_as}
Suppose Assumptions \ref{cond:IndependCond} and \ref{assump:regularity} hold, and there exists a sequence $\{r_t\}$ \footnote{Note that a choice of $r_t$ that is greater than $1/\sqrt{t-1}$ in the order sense works here.} satisfying
\begin{equation*}
\sum_{t=1}^\infty  \exp \Big\{-\frac{2m^2(t-1)r_t^2}{9L_b^2 \mathrm{Diameter}^4(\Theta)} \Big\} < \infty
\end{equation*}
for all $t$ large enough, then $\hat{\rho}_t^2 \triangleq \acute{\rho}_t^2 + D_t+ r_t \ge \rho^2$ almost surely with a constant $D_t$.
\end{thm}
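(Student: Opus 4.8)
The plan is to split $\hat{\rho}_t^2 = \acute{\rho}_t^2 + D_t + r_t$ so that the deterministic \emph{bias} of the estimator is absorbed by $D_t$, the stochastic \emph{fluctuation} is absorbed by $r_t$, and the summability hypothesis lets the Borel--Cantelli lemma convert a one-sided tail bound into an almost-sure statement. To set this up I would first introduce the \emph{population} one-step quantities
\[
  \rho_j^{*2} \triangleq \tfrac{1}{m}\big[L_{U_j}(\theta_{j-1}^*) - L_{U_j}(\theta_j^*) + L_{U_{j-1}}(\theta_j^*) - L_{U_{j-1}}(\theta_{j-1}^*)\big],
\]
so that Lemma~\ref{lem:rho_upper_bound} gives $\rho_j^{*2} \ge \|\theta_{j-1}^*-\theta_j^*\|^2$, and the defining property of the window function $h_W$ (applied to these squared quantities, as in \eqref{equ:bounded_estimator}) yields $\mathbb{E}[h_W(\rho_j^{*2},\dots,\rho_{j-W+1}^{*2})] \ge \rho^2$ for every window. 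The one-step estimate $\widetilde{\rho}_j^2$ in \eqref{equ:rho_estimator} is exactly the importance-sampling, plug-in counterpart of $\rho_j^{*2}$, so the whole argument reduces to controlling two gaps: the empirical-versus-population gap in the loss estimates, and the plug-in gap from using $\hat{\theta}_{j-1},\hat{\theta}_j$ in place of $\theta_{j-1}^*,\theta_j^*$.

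I would then bound the stochastic fluctuation. Since every loss difference is evaluated on the compact set $\Theta$, the Boundedness part of Assumption~\ref{assump:regularity} ($I_{U_t}\preceq L_b I$) together with smoothness gives $|L_{U}(\theta)-L_{U}(\theta')|\le \tfrac{L_b}{2}\mathrm{Diameter}^2(\Theta)$, so each one-step estimate $\widetilde{\rho}_j^2$ lies in an interval of width $O\big(L_b\,\mathrm{Diameter}^2(\Theta)/m\big)$; carrying the constant through (the width is $3L_b\,\mathrm{Diameter}^2(\Theta)/m$) reproduces the exponent of the statement. Because $\acute{\rho}_t^2$ is a $\tfrac{1}{t-1}$-weighted average of non-decreasing window functions of these bounded variables, I would build a Doob martingale in the successive per-time-step query batches and apply the Azuma--Hoeffding inequality: the increment contributed by each batch is at most $3L_b\,\mathrm{Diameter}^2(\Theta)/\big(m(t-1)\big)$, giving $\Pr\big(\acute{\rho}_t^2 < \mathbb{E}[\acute{\rho}_t^2]-r_t\big)\le \exp\{-2m^2(t-1)r_t^2/(9L_b^2\,\mathrm{Diameter}^4(\Theta))\}$. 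The hypothesized summability and Borel--Cantelli then give $\acute{\rho}_t^2 \ge \mathbb{E}[\acute{\rho}_t^2]-r_t$ for all $t$ large enough, almost surely.

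Next I would lower-bound the mean. For fixed $\theta$ the importance-weighted $\hat{L}_{U_j}(\theta)$ is unbiased for $L_{U_j}(\theta)$, and the residual bias from replacing $\theta_{j-1}^*,\theta_j^*$ by $\hat{\theta}_{j-1},\hat{\theta}_j$ is controlled through the excess-risk/strong-convexity bound $\mathbb{E}\|\hat{\theta}_t-\theta_t^*\|_2^2 \le 2\varepsilon/m$ (the same ingredient used for $\Delta_t$) together with the smoothness of $\ell$. Using the monotonicity and coordinatewise Lipschitzness of $h_W$ to push this argument-level error through to the output, I would obtain $\mathbb{E}[h_W(\widetilde{\rho}_j^2,\dots)] \ge \mathbb{E}[h_W(\rho_j^{*2},\dots)] - \eta_j \ge \rho^2 - \eta_j$ with $\tfrac{1}{t-1}\sum_j \eta_j \le D_t$, i.e.\ $\mathbb{E}[\acute{\rho}_t^2] \ge \rho^2 - D_t$. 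Combining the two displays gives $\hat{\rho}_t^2 = \acute{\rho}_t^2 + D_t + r_t \ge (\mathbb{E}[\acute{\rho}_t^2]-r_t) + D_t + r_t \ge \rho^2$ almost surely for $t$ large, which is the claim.

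The main obstacle is the concentration step, because the windowed one-step estimates are \emph{not} independent: consecutive $\widetilde{\rho}_j^2$ share a common batch of queried labels, and the adaptive sampling distribution $\bar{\Gamma}_t$ itself depends on $\hat{\theta}_{t-1}$ and hence on earlier batches. The clean exponent in the statement is what an Azuma--Hoeffding bound on a Doob martingale adapted to the batch filtration produces, so the real work is verifying that each batch perturbs $\acute{\rho}_t^2$ by at most $3L_b\,\mathrm{Diameter}^2(\Theta)/\big(m(t-1)\big)$ despite this overlap and adaptivity. A secondary technical point is making the plug-in bias $\eta_j$ uniform enough that its average $D_t$ is a genuine constant, which is precisely where Assumption~\ref{assump:regularity} and the importance-sampling normalization in \eqref{equ:rho_estimator} enter.
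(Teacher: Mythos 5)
Your proposal follows essentially the same route as the paper's proof: the paper likewise decomposes $\rho_t^2 - \acute{\rho}_t^2$ into a deterministic bias (the $\bar{U}_t+\bar{V}_t+\bar{W}_t$ terms that define $D_t$, bounded via Lemma~\ref{lem:OriginalConvergence}) plus a stochastic fluctuation controlled by a conditionally sub-Gaussian martingale concentration (Lemmas~\ref{lem:sub_Gaussian} and~\ref{lem:Hoeffding}, which is exactly your Doob-martingale/Azuma--Hoeffding step with increments of width $O(L_b\,\mathrm{Diameter}^2(\Theta)/m)$), and then applies Borel--Cantelli. The only cosmetic difference is that the paper restricts to the case $\|\theta_t^*-\theta_{t-1}^*\|=\rho$ with a plain average and defers the window-function $h_W$ bookkeeping to a citation, whereas you carry $h_W$ through directly.
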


\section{Experiments}
\label{Sec:experiments}
In this section, we present two experiments to validate our algorithm and the related theoretical results: one is to track a synthetic regression model and the other is to track the time-varying user preferences in a recommendation system. {More experiments on binary classification are presented in the Appendices.} We use three baseline algorithms for comparison: passive adaptive algorithm, active random algorithm and passive random algorithm. Compared with Algorithm \ref{alg:AdapActSeqLearning}, \emph{Passive} means drawing new samples using a uniform distribution $U_t$ in Step 3 and \emph{Random} means replacing the estimate of $\hat{\theta}_{t-1}$ with a random point from $\Theta$ in Step 1 and 4. All reported results are averaged over 1000 runs of Monte Carlo trials. The sizes of the sample pools for all the test algorithms are the same with $N_t=500$, and the number of considered time steps is 25. We construct the active sampling distribution with the exact solution of the SDP problem in Step 1. Note that approximation algorithms for SDP introduced in \cite{sourati2017asymptotic} can be applied to accelerate this process. We set $K_t = K_t^*$ for all the test algorithms and use the estimator defined in Section \ref{Sec:estimator} with window size $W=3$ to estimate $\rho$.

%In experiments, the active sampling distribution is given by solving the SDP shown in Algorithm \ref{alg:AdapActSeqLearning} using $N_t$ samples.

%As shown in \ref{thm:mean_tracking}, we set $K_t = K^*$ to ensure that the mean tracking criterion is met.
%In experiments, the active sampling distribution is given by solving the SDP shown in Algorithm \ref{alg:AdapActSeqLearning} using $N_t$ samples.
%\subsection{Data sets}

\subsection{Synthetic Regression}
The model of the synthetic regression problem is $y_t = \theta_t^{T} x_t + w_t$, where the input variable $x_t\thicksim \mathcal{N}(0,0.1I)$ is a 5-dimensional Gaussian vector  and the noise $w_t\thicksim \mathcal{N}(0,0.5)$. We consider learning the parameter $\theta_t$ by minimizing the following negative log-likelihood function $\ell(y_{k,t}|x_{k,t},\theta_t) = (y_{k,t}-\theta_t^{T}x_{k,t})^2$. In the simulations, the change of the true minimizers is $\rho=10$, and the target excess risk is $\varepsilon=1$. To highlight the time-varying nature of the problem, we implement the ``all samples up front'' method by using $\sum_{t=1}^{25} K_t^*$ samples at the first time step and keep this time-invariant regression model for the rest of considered time steps.

Fig. \ref{fig:RegPerformance} shows that using $K_t^*$ new samples, the passive adaptive algorithm meets the mean tracking criterion and our proposed active and adaptive learning algorithm outperforms all the other algorithms. The ``all samples up front'' algorithm outperforms the other algorithms initially, but it fails to track the time-varying underlying model after only a few time steps. Moreover, the excess risk of active random algorithm is almost the same as that of active adaptive algorithm, since the Hessian matrices in the regression task are independent of $\theta_t$. In this case, no approximation is needed and the change rate $\rho$ in the regression task can be arbitrarily large, as we mentioned in Remark \ref{rem:rate}. Fig \ref{fig:RegRhoEst} shows that $\hat{\rho}_t$ converges to a conservative estimate of $\rho$, which verifies Theorem \ref{thm:rho_as}. Moreover, the corresponding number of samples determined by Theorem \ref{thm:mean_tracking} is depicted in Fig. \ref{fig:RegNumSample}, which shrinks adaptively as $\hat{\rho}_t$ converges.

\begin{figure}[htp]
\centering     %%% not \center
\subfigure[]{\label{fig:RegPerformance}\includegraphics[width=0.32\textwidth]{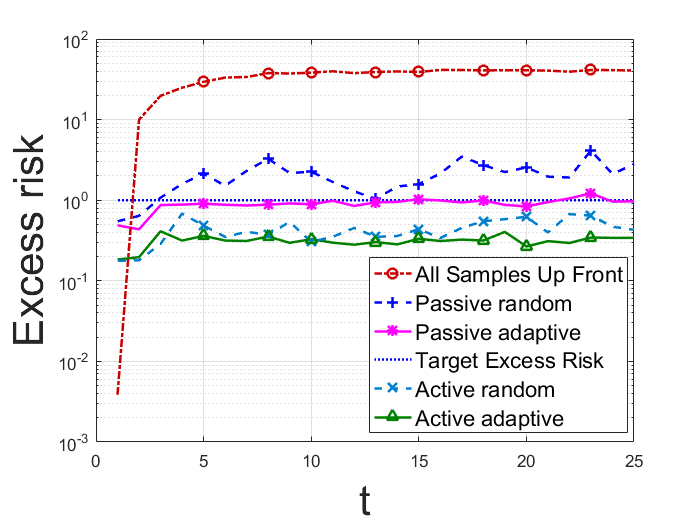}}
\subfigure[]{\label{fig:RegRhoEst}\includegraphics[width=0.32\textwidth]{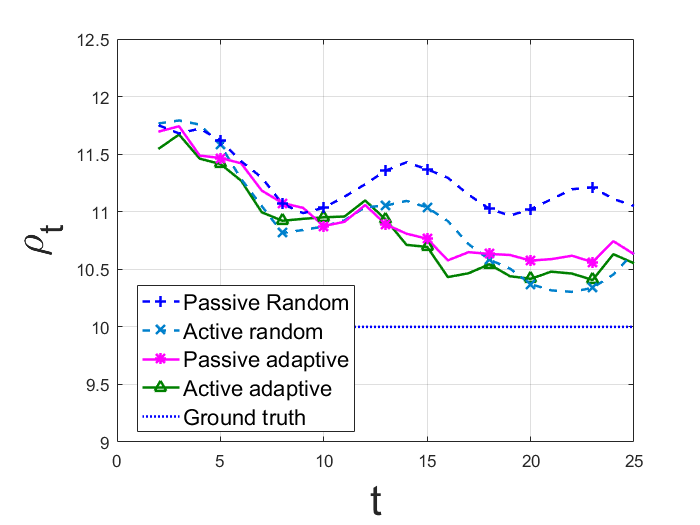}}
\subfigure[]{\label{fig:RegNumSample}\includegraphics[width=0.32\textwidth]{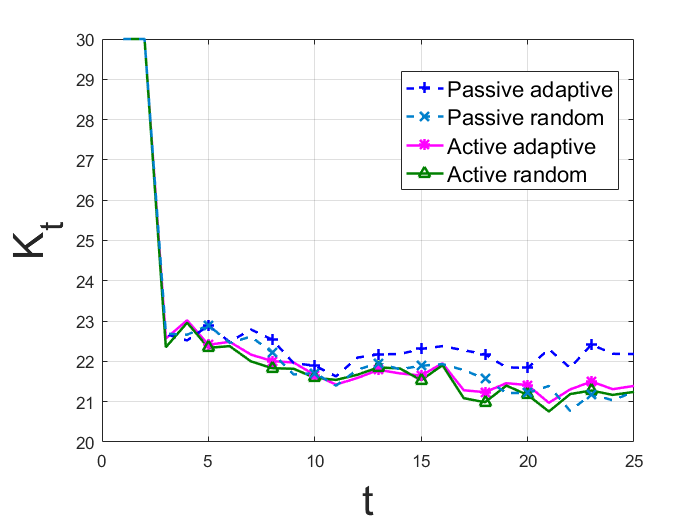}}
\subfigure[]{\label{fig:RealClassification}\includegraphics[width=0.32\textwidth]{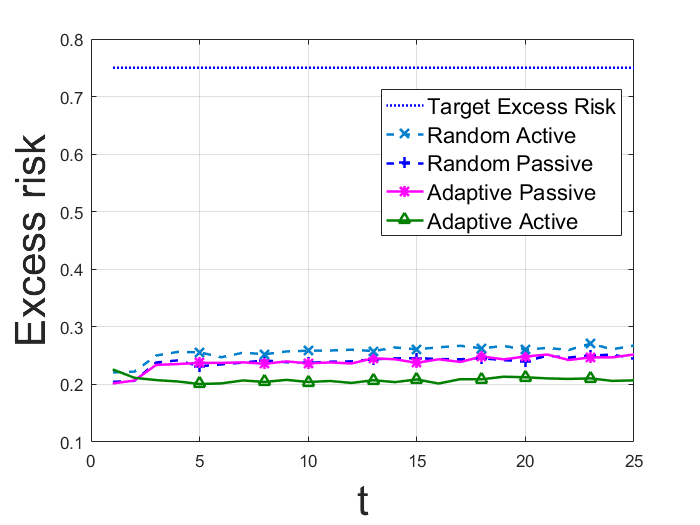}}
\subfigure[]{\label{fig:RealClasRhoEst}\includegraphics[width=0.32\textwidth]{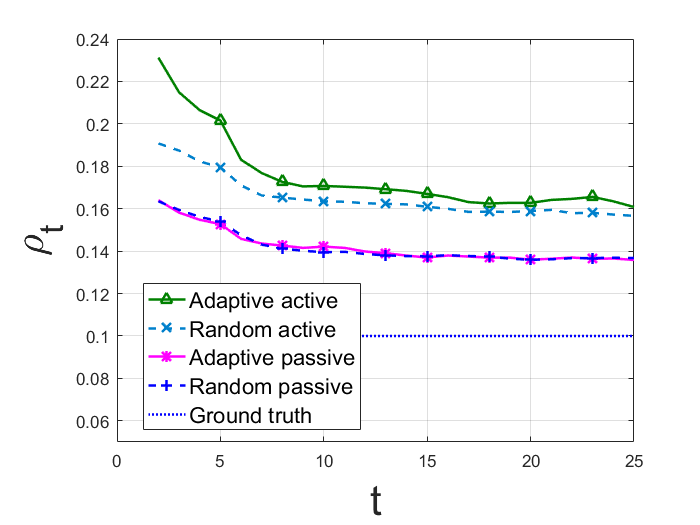}}
\subfigure[]{\label{fig:RealClassificationError}\includegraphics[width=0.32\textwidth]{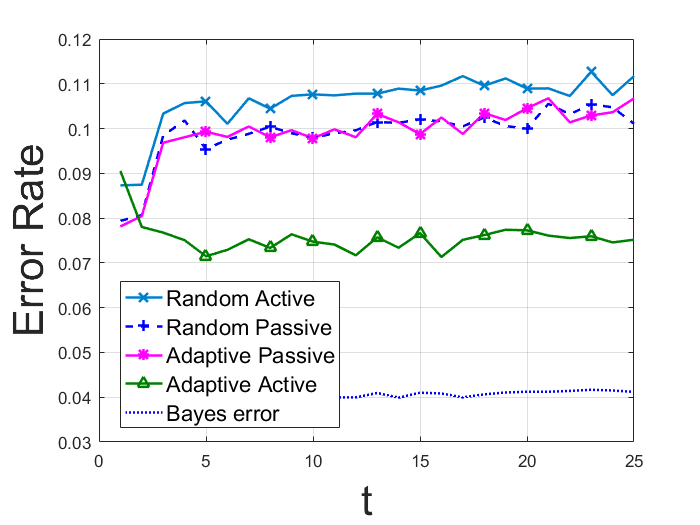}}
\caption{Experiments on synthetic regression: (a) Excess risk. (b) Estimated rate of change of minimizers. (c) Number of samples. Experiments on user preference tracking performance using Yelp data: (d) Excess risk. (e) Estimated rate of change of minimizers. (f) Classification error.}\label{fig:RegFigures}
\vspace{-0.35cm}
\end{figure}

\subsection{Tracking User Preferences in Recommendation System}

We utilize a subset of Yelp 2017 dataset {\footnote{https://www.yelp.com/dataset}} to  perform our experiments. We censor the original dataset such that each user has at least 10 ratings. After censoring procedure, our dataset contains ratings of $M = 473$ users for $N = 858$ businesses. By converting the original 5-scale ratings to a binary label for all businesses with high ratings (4 and 5) as positive ($1$) and low ratings (3 and below) as negative ($-1$), we form the $N \times M$ binary rating matrix $R$, which is very sparse and only $2.6\%$ are observed. We complete the sparse matrix $R$ to make recommendations by using the matrix factorization method \cite{koren2009matrix}. The rating matrix $R$ can be modeled by the following logistic regression model
\begin{equation}\label{equ:MF}
p(R_{u,b}|\phi_b,\phi_u ) = \frac{1}{1 + \exp^{-R_{u,b}\phi_u^\top\phi_b}},
\end{equation}
where $\phi_u$ and $\phi_b$ are the $d$-dimensional latent vectors representing the preferences of user $u$ and properties of business $b$, respectively. Then, we train $\phi_u$ and $\phi_b$ with dimension $d=5$ for each user and business in the dataset using maximum likelihood estimation by SGD. With the learned latent vectors, we can complete the matrix $R$ and make recommendations to customers in a collaborative filtering fashion \cite{elahi2016survey,rubens2015active}.

In practice, the preferences of users $\phi_{u,t}$ may vary with time $t$, and hence user features need to be retrained. Considering the fact that acquiring new ratings of users can be expensive, we apply our active and adaptive learning algorithm to further reduce the number of new samples while maintaining the mean tracking accuracy.

In the following experiment, we use a random subset of $\{\phi_b\}$ with size $N_t$ as our unlabeled data pool, while the remaining serve as a test set to evaluate the algorithms. To model the bounded time-varying changes of user preferences $\phi_{u,t}$, we start from a randomly chosen user feature and update it by adding a random Normal drift with norm bounded by 0.1 at each time step. Since we are unable to retrieve the actual answer from a real user, we generate the labels with the probabilistic model given by \eqref{equ:MF} with true parameter $\phi_{u,t}$ instead. Note that one cannot ask a user the same question twice in a real recommendation system, and therefore we implement without replacement sampling by querying the labels of the samples having the largest $K_t^*$ values in the active sampling distribution $\bar{\Gamma}_t$.

%Similar as the results in synthetic regression,
Fig. \ref{fig:RealClasRhoEst} shows that $\hat{\rho}_t$ converges to a conservative estimate of $\rho$, and the corresponding sample size converges to $K_t^*=13$ after two time steps. Fig. \ref{fig:RealClassification} and Fig. \ref{fig:RealClassificationError} show that our algorithm achieves a error rate of 6\% with these samples and significantly outperforms the other algorithms. This is because the Hessian matrices of logistic regression are functions of $\theta_t$, and hence the sampling distribution generated by the active and adaptive algorithm selects more informative samples.

%Note that in real data, each coordinate in features may be dependent with each other, then the independent dimension is less than the dimension of features. In this case, the upper bound for selecting the number of samples may be loose, and hence the empirical excess risk can be much smaller than the target excess risk constraint.

\vspace{-0.15cm}
\section{Conclusions}
\label{Sec:conclusion}
\vspace{-0.15cm}
In this paper, we propose an active and adaptive learning framework to solve a sequence of learning problems, which ensures a bounded excess risk for each individual learning task when the number of time steps is sufficiently large. We construct an estimator of the change in the minimizers $\hat{\rho}_t$ using active learning samples  and show that this estimate upper bounds the true parameter $\rho$ almost surely. We test our algorithm on a synthetic regression problem, and further apply it to a recommendation system that tracks changes in preferences of customers. Our experiments demonstrate that our algorithm achieves better performance compared to the other baseline algorithms.

%\subsubsection*{Acknowledgments}

\newpage
\small
\bibliography{active_learning}
\bibliographystyle{plainnat}

\normalsize
\newpage
\appendix
\section{Proof of Lemma \ref{lem:BoundsWithArbitrarySamplingDistri}}
\label{appx:lemma_1}

To prove Lemma \ref{lem:BoundsWithArbitrarySamplingDistri}, we use the following result from \cite{frostig2015competing}. In particular, the following lemma is a generalization of Theorem 5.1 in \cite{frostig2015competing}, and its proof follows from generalizing the derivation of that theorem and is omitted here.

\begin{lem}\label{lem:OriginalConvergence}
Suppose $\psi_1(\theta),\cdots,\psi_K(\theta):\mathbb{R}^d\to \mathbb{R}$ are random functions drawn i.i.d. from a distribution, where $\theta \in \Theta \subseteq \mathbb{R}^d$. Denote $P(\theta) = \mathbb{E}[\psi(\theta)]$ and let $Q(\theta):\mathbb{R}^d\to \mathbb{R}$ be another function. Let
\begin{equation*}
  \hat{\theta} = \argmin_{\theta\in \Theta} \sum_{k=1}^K \psi_k(\theta), \quad \mbox{and}\ \theta^*=\argmin_{\theta\in \Theta}P(\theta).
\end{equation*}
Assume:
\begin{enumerate}
  \item \textbf{Regularity conditions}:
  \begin{enumerate}
    \item Compactness: $\Theta$ is compact, and $\theta^*$ is an interior point of $\Theta$.
    \item Smoothness: $\psi(\theta)$ is smooth in the following sense: the first, second and third
derivatives exist at all interior points of $\Theta$ with probability one.
    \item Convexity: $\psi(\theta)$ is convex with probability one, and $\nabla^2P(\theta^*)$ is positive definite.
    \item $\nabla P(\theta^*)=0$ and $\nabla Q(\theta^*)=0$.
  \end{enumerate}
  \item \textbf{Concentration at $\theta^*$}: Suppose
  \begin{equation*}
    \Big\|\nabla\psi(\theta^*)\Big\|_{\nabla^2P(\theta^*)^{-1}}\le L_1'\quad \mbox{and}\quad \Big\|\big(\nabla^2P(\theta^*)\big)^{-1/2}\nabla^2 \psi(\theta^*)\big(\nabla^2P(\theta^*)\big)^{-1/2}\Big\|_2\le L_2'
  \end{equation*}
  hold with probability one.
  \item \textbf{Lipschitz continuity}: There exists a neighborhood $B$ of $\theta^*$ and a constant $L_3'$, such that $\nabla^2 \psi(\theta)$ and $\nabla^2 Q(\theta)$ are $L_3'$-Lipschitz in this neighborhood, namely,
    \begin{equation*}
    \begin{split}
      &\Big\|\big(\nabla^2P(\theta^*)\big)^{-1/2}\big(\nabla^2 \psi(\theta)- \nabla^2 \psi(\theta')\big)\big(\nabla^2P(\theta^*)\big)^{-1/2}\Big\|_2 \le L_3'\|\theta-\theta'\|_{\nabla^2P(\theta^*)},\\
      &\Big\|\big(\nabla^2Q(\theta^*)\big)^{-1/2}\big(\nabla^2 Q(\theta)- \nabla^2 Q(\theta')\big)\big(\nabla^2Q(\theta^*)\big)^{-1/2}\Big\|_2 \le L_3'\|\theta-\theta'\|_{\nabla^2P(\theta^*)},
    \end{split}
    \end{equation*}
    holds with probability one, for $\theta,\theta' \in B$,
\end{enumerate}
Choose $p \ge 2$ and define
\begin{equation*}
  \gamma \triangleq  c(L_1' L_3'+\sqrt{L_2'}) \sqrt{{\frac{p\log dK}{K}}},
\end{equation*}
where $c$ is an appropriately chosen constant. Let $c'$ be another appropriately chosen constant. If $K$
is large enough so that $\sqrt{\frac{p\log dK}{K}}\le c' \min\left\{\frac{1}{\sqrt{L_2'}},\frac{1}{L_1'L_3'},\frac{\mathrm{diameter}(B)}{L_1'} \right\}$, then:
\begin{equation*}
  (1-\gamma)\frac{\tau^2}{K}-\frac{L_1'^2}{K^{p/2}} \leq \mathbb{E}\big[Q(\hat{\theta})-Q(\theta^*)\big]  \leq (1+\gamma)\frac{\tau^2}{K}+\frac{\max_{\theta\in \Theta}\left[Q(\theta)-Q(\theta^*)\right]}{K^p},
\end{equation*}
where
\begin{equation*}
  \tau^2 \triangleq \frac{1}{2K^2} {\mathrm{Tr}}\bigg(\sum_{i,j}\mathbb{E}\big[\nabla\psi_i(\theta^*)\nabla\psi_j(\theta^*)^\top\big] \big(\nabla^2P(\theta^*)\big)^{-1} \nabla^2Q(\theta^*) \big(\nabla^2P(\theta^*)\big)^{-1}\bigg).
\end{equation*}
\end{lem}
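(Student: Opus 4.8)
The plan is to generalize the M-estimation argument behind Theorem 5.1 of \cite{frostig2015competing}, whose only structural change here is that the excess risk is measured through a second function $Q$ rather than through the population objective $P$ itself. The starting point is a second-order Taylor expansion of $Q$ around $\theta^*$. Since $\nabla Q(\theta^*)=0$, the linear term vanishes and
\begin{equation*}
Q(\hat{\theta})-Q(\theta^*)=\frac{1}{2}(\hat{\theta}-\theta^*)^\top \nabla^2 Q(\theta^*)(\hat{\theta}-\theta^*)+R_Q,
\end{equation*}
where the third-order remainder $R_Q$ is controlled by the $L_3'$-Lipschitz continuity of $\nabla^2 Q$ in the neighborhood $B$. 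The whole problem thus reduces to producing a sharp two-sided estimate of the quadratic form on the right, i.e., to characterizing $\hat{\theta}-\theta^*$.

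Next I would derive the Bahadur-type linear representation of the estimation error. Because $\theta^*$ is interior and $\hat{\theta}$ minimizes $\sum_k \psi_k$, the first-order optimality condition $\sum_k \nabla\psi_k(\hat{\theta})=0$ holds; expanding it about $\theta^*$ and inverting the empirical Hessian gives
\begin{equation*}
\hat{\theta}-\theta^*=-\Big(\sum_k \nabla^2\psi_k(\xi)\Big)^{-1}\sum_k \nabla\psi_k(\theta^*)
\end{equation*}
for an intermediate point $\xi$ on the segment $[\theta^*,\hat{\theta}]$. The concentration ingredients then enter: the $L_1'$ bound (in the $\nabla^2 P(\theta^*)^{-1}$ norm) controls the per-sample size of the centered gradient $\nabla\psi(\theta^*)$, whose mean is zero because $\nabla P(\theta^*)=0$, so a vector concentration bound gives fluctuations of order $K^{-1/2}$; the $L_2'$ bound controls the operator norm of each $\nabla^2 P(\theta^*)^{-1/2}\nabla^2\psi(\theta^*)\nabla^2 P(\theta^*)^{-1/2}$, so a matrix concentration bound shows $\frac{1}{K}\sum_k \nabla^2\psi_k(\theta^*)$ is within a relative factor $\gamma$ of $\nabla^2 P(\theta^*)$; and the $L_3'$-Lipschitz bound lets me replace $\nabla^2\psi_k(\xi)$ by $\nabla^2\psi_k(\theta^*)$ at the cost of another $\gamma$-order relative error, provided $\hat{\theta}\in B$. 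Substituting the representation, replacing the empirical Hessian by $\nabla^2 P(\theta^*)$, and taking expectation with the identity $\mathbb{E}[v^\top A v]=\mathrm{Tr}(A\,\mathbb{E}[vv^\top])$ produces exactly the $\tau^2$ term, while the Hessian-replacement errors assemble into the multiplicative factors $(1\pm\gamma)$.

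The remaining additive terms $-L_1'^2/K^{p/2}$ and $\max_{\theta}[Q(\theta)-Q(\theta^*)]/K^p$ come from splitting the expectation over a high-probability good event on which the local expansion is valid, and its complement. On the good event—guaranteed by the smallness condition $\sqrt{p\log(dK)/K}\le c'\min\{1/\sqrt{L_2'},\,1/(L_1'L_3'),\,\mathrm{diameter}(B)/L_1'\}$, which also forces $\gamma<1$—the quadratic estimate holds and $\hat{\theta}$ stays in $B$; on the complement, whose probability is polynomially small in $K$ (this is where $p\ge 2$ and the tails of the concentration bounds are used), the excess risk is bounded crudely by $\max_{\theta}[Q(\theta)-Q(\theta^*)]$, yielding the $K^{-p}$ term, and the negative $K^{-p/2}$ term is the matching slack in the lower bound arising from the same truncation together with the second moment of the gradient.

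I expect the main obstacle to be a clean, simultaneous control of both sides of the bound: the upper bound is the standard plug-in estimate, but the lower bound requires that none of the linearization and truncation errors can spuriously shrink the quadratic form, so the concentration events must be engineered to give genuinely two-sided relative control of the empirical Hessian and to keep the cross terms between the leading quadratic and the remainders of strictly lower order. Carefully propagating the extra $\nabla^2 Q(\theta^*)$ factor (rather than $\nabla^2 P(\theta^*)$) through each of these steps—in particular verifying that the asymptotic-variance trace picks up $\nabla^2 Q(\theta^*)$ sandwiched between two copies of $\nabla^2 P(\theta^*)^{-1}$, which forces the Lipschitz hypothesis to be imposed on $\nabla^2 Q$ as well—is the one genuinely new piece relative to \cite{frostig2015competing}, while matching the constants $c$ and $c'$ is the routine but tedious remainder.
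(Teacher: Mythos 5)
Your outline is essentially the paper's own route: the paper does not actually prove this lemma, stating only that it is a generalization of Theorem 5.1 of \cite{frostig2015competing} whose proof ``follows from generalizing the derivation of that theorem and is omitted,'' and your sketch---second-order expansion of $Q$ at $\theta^*$ using $\nabla Q(\theta^*)=0$, Bahadur-type linearization of $\hat{\theta}-\theta^*$ via the first-order condition with two-sided gradient/Hessian concentration (the $L_1'$, $L_2'$, $L_3'$ hypotheses entering exactly where you place them), and truncation on a high-probability good event producing the $K^{-p}$ and $K^{-p/2}$ slack terms---is precisely that generalization, including the correct observation that $\nabla^2 Q(\theta^*)$ gets sandwiched between two copies of $\big(\nabla^2 P(\theta^*)\big)^{-1}$ in $\tau^2$, which is why the Lipschitz hypothesis must also be imposed on $\nabla^2 Q$. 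The only caveats are write-up--level ones a full version of the Frostig et al.\ argument would handle anyway (e.g., using the integral form $\int_0^1 \nabla^2\psi\big(\theta^*+s(\hat{\theta}-\theta^*)\big)\,ds$ rather than a single intermediate point $\xi$, since the mean-value theorem fails for vector-valued gradients).
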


Then, we proceed to prove Lemma \ref{lem:BoundsWithArbitrarySamplingDistri}.

\begin{proof}[Proof of Lemma \ref{lem:BoundsWithArbitrarySamplingDistri}]
We first use Lemma \ref{lem:OriginalConvergence} to bound the excess risk, which is similar to the idea of Lemma 1 in \cite{chaudhuri2015convergence}. We first define
\begin{equation}
\psi_k(\theta_{t})=\ell (Y_{k,t}|X_{k,t},\theta_{t}),
\end{equation}
where $X_{k,t}\sim \Gamma_t$ and $Y_{k,t} \sim p(Y_{k,t}|X_{k,t},\theta_t^*)$ for $1\le k \le K_t$. Then,
\begin{equation}
P(\theta_t)=\mathbb{E}(\psi_k(\theta_{t}))=L_{\Gamma_t}(\theta_t),\quad \mbox{and} \quad \nabla^2P(\theta_t^*) = I_{\Gamma_t}(\theta_t^*).
\end{equation}
Further, we choose
\begin{equation}
Q(\theta_t)=L_{U_t}(\theta_t), \quad \mbox{and} \quad \nabla^2Q(\theta_t^*) = I_{U_t}(\theta_t^*).
\end{equation}
As shown in Assumption \ref{assump:regularity}, the assumptions of Lemma \ref{lem:OriginalConvergence} are satisfied. Moreover, according to the condition that $I_{\Gamma_t}(\theta^*)\succeq CI_{U_t}(\theta^*)$ holds for some constant $C<1$ in Lemma \ref{lem:BoundsWithArbitrarySamplingDistri}, we have
\begin{align}
\Big\|I_{\Gamma_t}(\theta_t^*)^{-1/2}&\big(H(x,\theta_t)-H(x,\theta_t')\big)I_{\Gamma_t}(\theta_t^*)^{-1/2}\Big\|_2 \nn\\
&\le \frac{1}{C} \Big\|I_{U_t}(\theta_t^*)^{-1/2}\big(H(x,\theta_t)- H(x,\theta_t')\big)I_{U_t}(\theta_t^*)^{-1/2}\Big\|_2 \nn \\
&\le \frac{L_3}{C}\|\theta-\theta'\|_{I_{U_t}(\theta_t^*)} \le \frac{L_3}{C^{3/2}}\|\theta-\theta'\|_{I_{\Gamma_t}(\theta_t^*)}
\end{align}
and
\begin{align}
\Big\|I_{U_t}(\theta_t^*)^{-1/2}&\big(H(x,\theta_t)- H(x,\theta_t')\big)I_{U_t}(\theta_t^*)^{-1/2}\Big\|_2  \nn \\
&\leq L_3\|\theta-\theta'\|_{I_{U_t}(\theta_t^*)}\leq \frac{L_3}{\sqrt{C}}\|\theta-\theta'\|_{I_{\Gamma_t}(\theta_t^*)}.
\end{align}
Hence, $L_3'=\max\{{L_3}/{C^{3/2}},{L_3}/{\sqrt{C}}\}={L_3}/{C^{3/2}}$. Similarly, we have $L_1'={L_1}/{\sqrt{C}}$ and $L_2'={L_2}/{C}$. In summary, the Assumptions 2 and 3 in Lemma \ref{lem:OriginalConvergence} are satisfied with constants
\begin{equation}
(L_1',L_2',L_3')=(L_1/\sqrt{C},L_2/C,L_3/C^{3/2}).
\end{equation}

Applying Lemma \ref{lem:OriginalConvergence} with $p=2$ and considering the fact that $\mathbb{E}_{x\sim\Gamma_t}\big[\nabla \ell(Y_{i,t}|X_{i,t},\theta_t^*)\nabla \ell (Y_{i,t}|X_{i,t},\theta_t^*)^\top\big] = I_{\Gamma_t}(\theta_t^*)$,
\begin{equation}\label{equ:ERM_original}
  (1-\gamma_t)\frac{\tau_t^2}{K_t}-\frac{L_1^2}{CK_t^{2}} \leq \mathbb{E}\big[L_{U_t}(\hat{\theta}_{\Gamma_t})-L_{U_t}(\theta_t^*)\big]  \leq (1+\gamma_t)\frac{\tau_t^2}{K_t}+\frac{\max_{\theta\in \Theta_t} \left[L_{U_t}(\theta)-L_{U_t}(\theta_t^*)\right]}{K_t^2}
\end{equation}
holds, where
\begin{equation}
\gamma_t =\mathcal{O}\bigg((L_1'L_3'+\sqrt{L_2'}) \sqrt{\frac{\log dK_t}{K_t}} \bigg) =\mathcal{O}\bigg( \frac{1}{C^2}(L_1L_3+\sqrt{L_2}) \sqrt{\frac{\log dK_t}{K_t}} \bigg),
\end{equation}
and $\tau_t^2 = \frac{1}{2}{\mathrm{Tr}}\Big(\big(I_{\Gamma_t}(\theta_t^*)\big)^{-1} I_{U_t}(\theta_t^*) \Big)$.

Note that if we assume the parameter set $\Theta_t \triangleq \{ \theta_t| \| \theta_t-\theta_{t-1}^* \| \leq \rho \}$ is known, then the second term in the right hand side of \eqref{equ:ERM_original} can be further bounded as
\begin{equation}
\frac{\max_{\theta\in \Theta_t} \left[L_{U_t}(\theta)-L_{U_t}(\theta_t^*)\right]}{K_t^2} \le \frac{\max_{\theta\in \Theta_t} \left[L_b\|\theta-\theta_t^*\|^2 \right]}{2K_t^2}
\le  \frac{L_b\mathrm{Diameter}(\Theta_t)^2 }{2K_t^2}\le  \frac{2 L_b \rho^2 }{K_t^2},
\end{equation}
where the inequalities follow from the boundedness condition in Assumption \ref{assump:regularity}. Combining this result with the inequality in \eqref{equ:ERM_original} completes the proof of Lemma \ref{lem:BoundsWithArbitrarySamplingDistri}.
\end{proof}

\section{Proof of Theorem \ref{thm:BoundsWithKnownRho}}
\label{appx:thm_1}
\begin{proof}[Proof of Theorem \ref{thm:BoundsWithKnownRho}]
The proof starts from the bound $b(\tau^2, \Delta_t,K_t)$ of the SGD algorithm in Assumption \ref{assum:SGD_bound}. To compute the convergence rate $\tau^2$, we need to first study the approximation of $\theta_t^*$ using $\hat{\theta}_{t-1}$. The difference between $\hat{\theta}_{t-1}$ and $\theta_{t}^*$ can be bounded as
\begin{equation}
\big\| \hat{\theta}_{t-1}-\theta_{t}^* \big\|_2 \leq \big\| \theta_{t-1}^*-\theta_{t}^* \big\|_2 + \big\| \hat{\theta}_{t-1}-\theta_{t-1}^* \big\|_2 \le \rho + \big\| \hat{\theta}_{t-1}-\theta_{t-1}^* \big\|_2.
\end{equation}
To bound the second term, we use the strongly convexity assumption in Assumption \ref{assump:regularity},
%the Taylor expansion of $\ell(y|x,\hat{\theta}_{t-1})$ at $\theta_{t-1}^*$ and
\begin{equation}
\big\| \hat{\theta}_{t-1}-\theta_{t-1}^* \big\|_2^2 \leq \frac{2}{m}(L_{U_{t-1}}\big(\hat{\theta}_{t-1})-L_{U_{t-1}}({\theta}_{t-1}^*)\big).
\end{equation}
Suppose the excess risk bound $\mathbb{E}[L_{U_{t-1}}(\hat{\theta}_{t-1}) - L_{U_{t-1}}({\theta}_{t-1}^*)]\le \varepsilon$ holds for $t-1$. Then, we have
\begin{equation}\label{equ:Delta_t}
\mathbb{E}(\big\| \hat{\theta}_{t-1}-\theta_{t-1}^* \big\|_2) \le \sqrt{\mathbb{E} (\big\| \hat{\theta}_{t-1}-\theta_{t-1}^* \big\|_2^2)} \le \sqrt{{2\varepsilon}/{m}}.
\end{equation}
Then, $\big\| \hat{\theta}_{t-1}-\theta_{t-1}^* \big\|_2 \leq \frac{1}{\delta}\sqrt{\frac{2\varepsilon}{m}}$ holds with probability $1-\delta$ by Markov's inequality, for $\forall\delta\in(0,1)$. Thus,
\begin{equation}\label{equ:ExpectationDiffOfParameters}
  \big\| \hat{\theta}_{t-1}-\theta_{t}^* \big\|_2 \leq \rho+\frac{1}{\delta}\sqrt{\frac{2\varepsilon}{m}}
\end{equation}
holds with probability $1-\delta$.
By the self-concordance condition in Assumption \ref{assump:concordance}, we have that
\begin{equation}
(1-\beta_t)H(x_t,\theta_{t}^*) \preceq H(x_t,\hat{\theta}_{t-1}) \preceq (1+\beta_t)H(x_t,\theta_{t}^*),\quad x_t\in \mathcal{S}_t,
\end{equation}
holds with probability $1-\delta$, where $\beta_t = L_4(\rho+\frac{1}{\delta}\sqrt{\frac{2\varepsilon}{m}})$. Then, for distribution $\Gamma_t^*$, $\hat{\Gamma}_t^*$ and $U_t$, we have
\begin{equation}\label{equ:SelfConcerdence1}
(1-\beta_t)I_{\Gamma_t^*}(\theta_{t}^*) \preceq I_{\Gamma_t^*}(\hat{\theta}_{t-1}) \preceq (1+\beta_t)I_{\Gamma_t^*}(\theta_{t}^*),
\end{equation}
\begin{equation}\label{equ:SelfConcerdence2}
(1-\beta_t)I_{\hat{\Gamma}_t^*}(\theta_{t}^*) \preceq I_{\hat{\Gamma}_t^*}(\hat{\theta}_{t-1}) \preceq (1+\beta_t)I_{\hat{\Gamma}_t^*}(\theta_{t}^*),
\end{equation}
\begin{equation}\label{equ:SelfConcerdence3}
(1-\beta_t)I_{U_t}(\theta_{t}^*) \preceq I_{U_t}(\hat{\theta}_{t-1}) \preceq (1+\beta_t)I_{U_t}(\theta_{t}^*).
\end{equation}
Recall that $\bar{\Gamma}_t=\alpha_{t}\hat{\Gamma}_t^*+(1-\alpha_{t})U_t$. Hence, $I_{\bar{\Gamma}_t}(\theta_t^*) \succeq \alpha_{t} I_{\hat{\Gamma}_t^*}(\theta_t^*)$ which implies that $I_{\bar{\Gamma}_t}(\theta_t^*)^{-1} \preceq \frac{1}{\alpha_{t}} I_{\hat{\Gamma}_t}(\theta_t^*)^{-1}$. Thus,
\begin{equation}\label{equ:UBThmStep1}
\tau_t^2 = \frac{1}{2}\mathrm{Tr}\big(I_{\bar{\Gamma}_t}^{-1}(\theta_t^*) I_{U_t}(\theta_t^*) \big) \le \frac{1}{2\alpha_{t}}\mathrm{Tr}\big(I_{\hat{\Gamma}_t^*}^{-1}(\theta_t^*) I_{U_t}(\theta_t^*) \big).
\end{equation}
From \eqref{equ:SelfConcerdence2} and \eqref{equ:SelfConcerdence3}, \eqref{equ:UBThmStep1} can be further upper bounded by
\begin{align}\label{equ:UBThmStep2}
\mathrm{Tr}\big(I_{\hat{\Gamma}_t^*}^{-1}(\theta_t^*) I_{U_t}(\theta_t^*) \big)
&\le  \frac{1+\beta_t}{1-\beta_t} {\mathrm{Tr}}\big(I_{\hat{\Gamma}_t^*}^{-1} (\hat{\theta}_{t-1}) I_{U_t}(\hat{\theta}_{t-1}) \big)  \nn \\
& \overset{(a)}{\le}  \frac{1+\beta_t}{1-\beta_t} {\mathrm{Tr}}\big(I_{\Gamma_t^*}^{-1} (\hat{\theta}_{t-1}) I_{U_t}(\hat{\theta}_{t-1}) \big) \nn \\
& \overset{(b)}{\le} \left(\frac{1+\beta_t}{1-\beta_t}\right)^2 {\mathrm{Tr}}\big(I_{\Gamma_t^*}^{-1} (\theta_t^*) I_{U_t}(\theta_t^*) \big),
\end{align}
where (a) is because that $\hat{\Gamma}_t^*$ is the minimizer of ${\mathrm{Tr}}\big(I_{{\Gamma}_t}^{-1} (\hat{\theta}_{t-1}) I_{U_t}(\hat{\theta}_{t-1}) \big)$ and (b) follows from the results in \eqref{equ:SelfConcerdence1} and \eqref{equ:SelfConcerdence3}.

To bound the difference between the initialization and the true minimizer, we use triangle inequality and Jensen's inequality to get
\begin{equation}\label{equ:interstep_bound}
   \sqrt{\mathbb{E}\|\hat{\theta}_{t-1}-\theta_t^* \|_2^2}
  \le \sqrt{\mathbb{E} \|\hat{\theta}_{t-1}-\theta_{t-1}^*\|_2^2}+ \|\theta_t^*-\theta_{t-1}^*\|
  \le \sqrt{\mathbb{E} \|\hat{\theta}_{t-1}-\theta_{t-1}^*\|_2^2}+ \rho.
\end{equation}
From \eqref{equ:Delta_t}, we have
\begin{equation}
\mathbb{E}\|\hat{\theta}_{t-1}-\theta_{t-1}^*\|_2^2 \le \frac{2\varepsilon}{m},
\end{equation}
which yields
\begin{equation}
\mathbb{E}\|\hat{\theta}_{t-1}-\theta_t^* \|_2^2 \le \Big(\sqrt{\frac{2\varepsilon}{m}}+\rho\Big)^2 =\Delta_t^2.
\end{equation}

Thus, combining the above result with the bound in  \eqref{equ:UBThmStep2}, we can conclude that the following upper bound
\begin{equation}
\mathbb{E}[L_{U_t}(\hat{\theta}_t) - L_{U_t}(\theta_t^*)] \leq b(\acute{\tau}_t^{2}, \Delta_t ,K_t),
\end{equation}
holds with probability 1-$\delta$, where
\begin{equation}
\acute{\tau}_t^{2} = \Big(\frac{1+\beta_{t}}{1-\beta_{t}}\Big)^2 \frac{\mathrm{Tr}\big( I_{\Gamma_t^*}^{-1}(\theta_t^*) I_{U_t}(\theta_t^*) \big)}{2 \alpha_t}.
\end{equation}

This completes the proof of Theorem \ref{thm:BoundsWithKnownRho}.

%\subsection{Proof of the lower bound}
%To prove the lower-bound, since $I_{\bar{\Gamma}_t}(\theta_t^*) \succeq (1-\alpha_{t}) I_{U_t}(\theta_t^*)$, we can apply Lemma \ref{lem:BoundsWithArbitrarySamplingDistri} with $p=2$, then
%\begin{equation}
%\begin{split}
%\mathbb{E}[L_{U_t}(\hat{\theta}_t) - L_{U_t}(\theta_t^*)] \geq (1-\gamma_{t})\frac{{\mathrm{Tr}}\left( I_{\bar{\Gamma}_t}^{-1}(\theta_t^*) I_{U_t}(\theta_t^*) \right)}{K_t} - \frac{L_1^2}{(1-\alpha_t) K_t} \\
%\ge (1-\gamma_{t})\frac{{\mathrm{Tr}}\left( I_{\Gamma_t^*}^{-1}(\theta_t^*) I_{U_t}(\theta_t^*) \right)}{K_t} - \frac{L_1^2}{(1-\alpha_t) K_t}
%\end{split}
%\end{equation}
%holds.

\end{proof}

\section{Proof of Lemma \ref{lem:rho_upper_bound}}
\label{appx:lemma_2}
\begin{proof}[Proof of Lemma  \ref{lem:rho_upper_bound}]
The following inequalities hold from the strong convexity assumption and the fact that $\nabla L_{U_t}(\theta^*_{t})=\nabla L_{U_{t-1}}(\theta^*_{t-1})=0$:
\begin{align}
  L_{U_t}(\theta^*_{t-1}) & \ge L_{U_t}(\theta^*_{t})+\frac{1}{2}m\|\theta_t^*-\theta_{t-1}^*\|_2^2  \\
  L_{U_{t-1}}(\theta^*_{t}) & \ge L_{U_{t-1}}(\theta^*_{t-1})+\frac{1}{2}m\|\theta_t^*-\theta_{t-1}^*\|_2^2.
\end{align}
Then, adding and rearranging these inequalities yields
\begin{align}
   \frac{1}{m}&\Big[L_{U_t}(\theta^*_{t-1})- L_{U_t}(\theta^*_{t}) +  L_{U_{t-1}}(\theta^*_{t})-L_{U_{t-1}}(\theta^*_{t-1})\Big]\ge\|\theta_t^*-\theta_{t-1}^*\|_2^2.
\end{align}
\end{proof}

%For the case $\rho=\|\theta_t^*-\theta_{t-1}^*\|\ \forall t$
Moreover, we have the following relation
\begin{align}
\|\theta_t^*&-\theta_{t-1}^*\|_2^2 \nn \\
        &\le \frac{1}{m}\Big[L_{U_t}(\theta^*_{t-1})- L_{U_t}(\theta^*_{t})+  L_{U_{t-1}}(\theta^*_{t})-L_{U_{t-1}}(\theta^*_{t-1})\Big] \nn \\
        & = \frac{1}{m}\Big[\mathbb{E}_{X\sim U_t}\big[D\big(p(Y|X,\theta_{t}^*)\|p(Y|X,\theta_{t-1}^*)\big)\big]
        +\mathbb{E}_{X\sim U_{t-1}}\big[D\big(p(Y|X,\theta_{t-1}^*)\|p(Y|X,\theta_{t}^*)\big)\big]\Big],
\end{align}
where
\begin{equation}
  D(p\|q)\triangleq \int_{y \in \mathcal{Y}} p(y)\log \frac{p(y)}{q(y)}dy
\end{equation}
is the KL divergence between distribution $p$ and $q$.

Thus, an upper bound of $\rho$ can be constructed by estimating the symmetric KL divergence between $p(y|x,\theta_t^*)$ and $p(y|x,\theta_{t-1}^*)$ using the data pool $U_t$ and $U_{t-1}$, respectively.

\section{Proof of Theorem \ref{thm:rho_as}}
\label{appx:thm_2}
To analyze the performance of the estimator of $\rho$, we need to introduce a few results for sub-Gaussian random variables including the following key technical lemma from \cite{antonini2005note}. This lemma controls the concentration of sums of random variables that are sub-Gaussian conditioned on a particular filtration.

\begin{lem}\label{lem:sub_Gaussian}
Suppose we have a collection of random variables $\{V_i\}_{i=1}^n$ and a filtration
$\{\mathscr{F}_i\}_{i=0}^n$ such that for each random variable $V_i$ it holds that
\begin{enumerate}
  \item $\mathbb{E}[\exp\{s(V_i-\mathbb{E}[V_i| \mathscr{F}_{i-1}])\}| \mathscr{F}_{i-1}] \le e^{\frac{1}{2}\sigma_i^2s^2}$ with $\sigma_i^2$ a constant.
  \item $V_i$ is $\mathscr{F}_{i}$-measurable.
\end{enumerate}
\end{lem}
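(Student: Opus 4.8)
The statement as displayed lists only the two hypotheses and its conclusion has been cut off, so I take the intended claim to be the standard sub-Gaussian martingale concentration bound that these hypotheses are designed to deliver: writing $S_n \triangleq \sum_{i=1}^n \big(V_i - \mathbb{E}[V_i\mid \mathscr{F}_{i-1}]\big)$ for the centered partial sum, the conclusion should be
\[
\mathbb{P}(S_n \ge t) \le \exp\Big(-\frac{t^2}{2\sum_{i=1}^n \sigma_i^2}\Big), \qquad t>0,
\]
together with the matching lower-tail (hence two-sided) bound obtained by symmetry. The plan is to run the Chernoff, or exponential-moment, method: control the moment generating function of $S_n$ by peeling off the summands one at a time through the filtration, then optimize the free exponential parameter.

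First I would fix $s>0$ and apply Markov's inequality to $e^{sS_n}$, which reduces the tail bound to an estimate of $\mathbb{E}[e^{sS_n}]$. To control this I would exploit the telescoping structure $S_n = S_{n-1} + \big(V_n - \mathbb{E}[V_n\mid \mathscr{F}_{n-1}]\big)$ together with the tower property of conditional expectation. The key observation is that $V_1,\dots,V_{n-1}$ are $\mathscr{F}_{n-1}$-measurable by condition~2 (and the filtration is increasing), so $S_{n-1}$ is $\mathscr{F}_{n-1}$-measurable and factors out of the inner conditional expectation:
\[
\mathbb{E}[e^{sS_n}\mid \mathscr{F}_{n-1}] = e^{sS_{n-1}}\,\mathbb{E}\big[e^{s(V_n - \mathbb{E}[V_n\mid \mathscr{F}_{n-1}])}\mid \mathscr{F}_{n-1}\big] \le e^{sS_{n-1}}\, e^{\frac{1}{2}\sigma_n^2 s^2},
\]
where the final inequality is exactly the conditional sub-Gaussian hypothesis in condition~1. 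Taking expectations and iterating this peeling step down through $i = n, n-1, \dots, 1$ yields $\mathbb{E}[e^{sS_n}] \le \exp\big(\tfrac{1}{2}s^2 \sum_{i=1}^n \sigma_i^2\big)$.

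Combining the two ingredients gives $\mathbb{P}(S_n \ge t) \le \exp\big(-st + \tfrac{1}{2}s^2 \sum_i \sigma_i^2\big)$ for every $s>0$, and optimizing over $s$ by choosing $s = t/\sum_i \sigma_i^2$ produces the claimed sub-Gaussian tail. The lower tail follows verbatim by applying the same argument to the collection $\{-V_i\}$, whose centered partial sums inherit the identical conditional sub-Gaussian bound with the same $\sigma_i^2$. The only genuinely delicate point is the measurability bookkeeping in the peeling recursion — verifying at each stage that the accumulated partial sum is measurable with respect to the correct $\sigma$-field so that it may be pulled outside the conditional expectation, and that the recursion unwinds cleanly via repeated application of the tower property; once this is in place, the surrounding estimates are routine and the optimization over $s$ is elementary.
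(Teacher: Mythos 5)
Your proposal is correct, but there is nothing in the paper to compare it against: the paper imports this lemma from \cite{antonini2005note} and gives no proof of its own (a typesetting slip in the source even places the conclusion outside the \texttt{lem} environment, which is why the statement you were given appears to end after the hypotheses). Your reconstruction of the missing conclusion is essentially right, with one difference: the paper's actual claim is the weighted version, namely that for every $a\in\mathbb{R}^n$,
\[
\mathbb{P}\Big\{\sum_{i=1}^n a_iV_i > \sum_{i=1}^n a_i\,\mathbb{E}[V_i\mid\mathscr{F}_{i-1}]+t\Big\}\le \exp\Big\{-\frac{t^2}{2\nu}\Big\},\qquad \nu=\sum_{i=1}^n\sigma_i^2a_i^2
\]
(the paper's displayed exponent is missing the minus sign, an evident typo), of which you proved the case $a_i\equiv 1$. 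Your Chernoff-plus-tower-property peeling argument extends verbatim: when peeling the $i$-th increment, apply condition~1 with $s$ replaced by $sa_i$, which bounds the conditional moment generating function of $a_i\big(V_i-\mathbb{E}[V_i\mid\mathscr{F}_{i-1}]\big)$ by $e^{\frac{1}{2}\sigma_i^2a_i^2s^2}$ and produces the variance proxy $\nu$ after optimizing $s=t/\nu$. Note that this step (and your lower-tail argument via $\{-V_i\}$, which matches the paper's ``other tail is similarly bounded'') tacitly requires condition~1 to hold for all real $s$, not just $s>0$; that is the intended reading of the sub-Gaussian hypothesis. Your measurability bookkeeping is sound: $S_{n-1}$ is $\mathscr{F}_{n-1}$-measurable because each $V_i$ with $i\le n-1$ is $\mathscr{F}_i$-measurable by condition~2 and each $\mathbb{E}[V_i\mid\mathscr{F}_{i-1}]$ is $\mathscr{F}_{i-1}$-measurable, with the filtration increasing. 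In short, you have supplied a correct, self-contained proof (the standard Azuma--Hoeffding-type argument with sub-Gaussian increments) of a result the paper only cites.
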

Then for every $\ba \in \mathbb{R}^n$ it holds that
\begin{equation*}
  \mathbb{P}\left\{\sum_{i=1}^n a_iV_i > \sum_{i=1}^n a_i \mathbb{E}[V_i| \mathscr{F}_{i-1}]+t \right\} \le \exp \Big\{\frac{t^2}{2\nu}\Big\}
\end{equation*}
with $\nu =\sum_{i=1}^n \sigma_i^2 a_i^2$. The other tail is similarly bounded.

If we can upper bound the conditional expectations $\mathbb{E}[V_i| \mathscr{F}_{i-1}] \le \xi_i$ by some constants $\xi_i$, then we have
\begin{equation}\label{equ:sub_gaussian_bound}
  \mathbb{P}\left\{\sum_{i=1}^n a_iV_i > \sum_{i=1}^n a_i \xi_i+t \right\} \le \exp \Big\{\frac{t^2}{2\nu}\Big\}.
\end{equation}

For our analysis, we generally cannot compute $\mathbb{E}[V_i| \mathscr{F}_{i-1}]$ directly, but we can find the upper bound $\xi_i$. To compute $\sigma_i^2$ for use in Lemma \ref{lem:sub_Gaussian}, we employ the following conditional version of Hoeffding's Lemma.

\begin{lem}\label{lem:Hoeffding}
(Conditional Hoeffding's Lemma): If a random
variable $V$ and a sigma algebra $\mathscr{F}$ satisfy $a \le V \le b$ and
$E[V|\mathscr{F}] = 0$, then
\begin{equation*}
  \mathbb{E}[e^{sV}|\mathscr{F}]\le \exp\Big\{\frac{1}{8}(b-a)^2 s^2 \Big\}.
\end{equation*}
\end{lem}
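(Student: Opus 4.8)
The plan is to adapt the classical proof of Hoeffding's lemma, carrying conditional expectations $\mathbb{E}[\,\cdot\,|\mathscr{F}]$ in place of ordinary expectations throughout, and to check that the single place where one normally invokes independence is handled here purely by the hypothesis $\mathbb{E}[V|\mathscr{F}]=0$ together with the a.s.\ boundedness $a\le V\le b$. The argument splits cleanly into a probabilistic step that reduces everything to a deterministic quantity, followed by an elementary calculus estimate.

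First I would exploit the convexity of $x\mapsto e^{sx}$. Writing any realization of $V$ in $[a,b]$ as the convex combination $V=\frac{b-V}{b-a}\,a+\frac{V-a}{b-a}\,b$ gives the pointwise bound
\begin{equation*}
e^{sV}\le \frac{b-V}{b-a}\,e^{sa}+\frac{V-a}{b-a}\,e^{sb},
\end{equation*}
which holds almost surely since $a\le V\le b$. Taking $\mathbb{E}[\,\cdot\,|\mathscr{F}]$ of both sides, and using linearity of conditional expectation together with $\mathbb{E}[V|\mathscr{F}]=0$ (so that the terms linear in $V$ contribute nothing), I obtain
\begin{equation*}
\mathbb{E}[e^{sV}|\mathscr{F}]\le \frac{b}{b-a}\,e^{sa}-\frac{a}{b-a}\,e^{sb}.
\end{equation*}
The right-hand side is now deterministic, so all remaining work is analysis with no further probabilistic content.

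Next I would reduce this deterministic bound to the desired sub-Gaussian form. Setting $p\triangleq -a/(b-a)\in[0,1]$ and $h\triangleq s(b-a)$, the right-hand side equals $e^{\phi(h)}$ with $\phi(h)\triangleq -ph+\log(1-p+pe^{h})$. A direct computation gives $\phi(0)=0$ and $\phi'(0)=0$, and
\begin{equation*}
\phi''(h)=\frac{p(1-p)e^{h}}{(1-p+pe^{h})^2}=t(1-t),\qquad t\triangleq\frac{pe^{h}}{1-p+pe^{h}}\in[0,1],
\end{equation*}
so that $\phi''(h)\le 1/4$ uniformly in $h$. By Taylor's theorem with Lagrange remainder, $\phi(h)=\tfrac12\phi''(\xi)h^2\le h^2/8$ for some $\xi$ between $0$ and $h$. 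Substituting $h=s(b-a)$ yields $\mathbb{E}[e^{sV}|\mathscr{F}]\le \exp\{s^2(b-a)^2/8\}$, as claimed.

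The only point requiring care is the passage from the pointwise convexity inequality to its conditional-expectation form: one must note that the inequality holds almost surely (guaranteed by $a\le V\le b$ a.s.) so that monotonicity of conditional expectation applies, and that the coefficients $\frac{b-V}{b-a}$ and $\frac{V-a}{b-a}$ are affine in $V$, so conditioning annihilates exactly the $V$-dependent part via $\mathbb{E}[V|\mathscr{F}]=0$. Once that bookkeeping is in place, no genuine obstacle remains—the conclusion rests entirely on the elementary fact $\phi''\le 1/4$—so I expect the proof to be short and routine.
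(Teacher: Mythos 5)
Your proof is correct. The paper itself never proves this lemma: it is quoted in Appendix D as a standard tool (the classical Hoeffding lemma with ordinary expectations replaced by conditional ones) on the way to bounding the moment generating functions in the proof of Theorem \ref{thm:rho_as}, so there is no in-paper argument to compare against. Your route is exactly the canonical one and conditionalizes cleanly: the convexity bound $e^{sV}\le \frac{b-V}{b-a}e^{sa}+\frac{V-a}{b-a}e^{sb}$ holds almost surely, monotonicity and linearity of $\mathbb{E}[\,\cdot\,|\mathscr{F}]$ annihilate the affine-in-$V$ part via $\mathbb{E}[V|\mathscr{F}]=0$, and the remaining bound is deterministic, so the elementary estimate $\phi''(h)=t(1-t)\le 1/4$ plus Taylor's theorem finishes it. One small point you glossed over: your claim $p=-a/(b-a)\in[0,1]$ presumes $a\le 0\le b$, which does follow from the hypotheses since $a\le \mathbb{E}[V|\mathscr{F}]\le b$ almost surely and $\mathbb{E}[V|\mathscr{F}]=0$ (and in the degenerate case $a=b$ one has $V=0$ a.s., where the bound is trivial); stating this explicitly would close the only gap in the bookkeeping.
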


\begin{proof}[Proof of Lemma Theorem \ref{thm:rho_as}]
To simplify our proof, we look at a special case where $\|\theta_{t}^*-\theta_{t-1}^*\| =\rho$ holds. The proof for the case $\|\theta_{t}^*-\theta_{t-1}^*\| \le \rho$ is similar, and more details about the window function $h_W$ can be found in \cite{wilson2018adaptive}.

For the case $\|\theta_{t}^*-\theta_{t-1}^*\| =\rho$, we use the following estimator to combine the one-step estimator $\widetilde{\rho}_t$
\begin{equation}
  \acute{\rho}_t^2 = \frac{1}{t-1} \sum_{i=2}^t \widetilde{\rho}_i^2 =\frac{1}{m(t-1)} \sum_{i=2}^t\big(\hat{L}_{U_i}(\hat{\theta}_{i-1}) - \hat{L}_{U_i}(\hat{\theta}_{i})+\hat{L}_{U_{i-1}}(\hat{\theta}_{i})-\hat{L}_{U_{i-1}}(\hat{\theta}_{i-1})\big).
\end{equation}

% To proceed, denote
%\begin{equation}
%  \widetilde{\rho}_i^2 = \frac{1}{m}\big(\hat{L}_{U_i}(\hat{\theta}_{i-1}) - \hat{L}_{U_i}(\hat{\theta}_{i})+\hat{L}_{U_{i-1}}(\hat{\theta}_{i})-\hat{L}_{U_{i-1}}(\hat{\theta}_{i-1})\big)
%\end{equation}

We denote
\begin{equation}
  \rho_t^2 \triangleq \frac{1}{m(t-1)} \sum_{i=2}^t\big(L_{U_i}(\theta_{i-1}^*) - L_{U_i}(\theta_{i}^*)+L_{U_{i-1}}(\theta_{i}^*)-L_{U_{i-1}}(\theta_{i-1}^*)\big)\ge \rho^2.
\end{equation}
where the inequality follows from Lemma \ref{lem:rho_upper_bound}. We want to construct $\hat{\rho}_t$, such that $\hat{\rho}_t^2 \ge \rho_t^2 \ge \rho^2$ almost surely. Then, we have
\begin{align}
  \rho_t^2 - \acute{\rho}_t^2  = & \frac{1}{m(t-1)} \Big(\sum_{i=2}^t L_{U_i}(\theta_{i-1}^*) - \hat{L}_{U_i}( \hat{\theta}_{i-1})  + \sum_{i=2}^t L_{U_{i-1}}(\theta_{i}^*) - \hat{L}_{U_{i-1}}(\hat{\theta}_{i}) \\
   & \qquad + \hat{L}_{U_1}(\hat{\theta}_{1}) - L_{U_1}(\theta_{1}^*)+2\sum_{i=2}^{t-1} \hat{L}_{U_i}(\hat{\theta}_{i}) - L_{U_i}(\theta_{i}^*)
   +\hat{L}_{U_t}(\hat{\theta}_{t}) - L_{U_t}(\theta_{t}^*) \Big).
\end{align}
Define
\begin{align}
  U_t &\triangleq \frac{1}{(t-1)}\sum_{i=2}^t \frac{1}{m}\big(L_{U_i}(\theta_{i-1}^*) - \hat{L}_{U_i}( \hat{\theta}_{i-1}) \big), \\
  V_t &\triangleq \frac{1}{(t-1)}\sum_{i=2}^t \frac{1}{m}\big(L_{U_{i-1}}(\theta_{i}^*) - \hat{L}_{U_{i-1}}(\hat{\theta}_{i}) \big), \\
  W_t & \triangleq \frac{1}{m(t-1)} \Big(\hat{L}_{U_1}(\hat{\theta}_{1}) - L_{U_1}(\theta_{1}^*)+2\sum_{i=2}^{t-1} \big(\hat{L}_{U_i}(\hat{\theta}_{i}) - L_{U_i}(\theta_{i}^*)\big)
   +\hat{L}_{U_t}(\hat{\theta}_{t}) - L_{U_t}(\theta_{t}^*) \Big).
\end{align}
Then it holds that
\begin{equation}
  \rho_t^2 - \acute{\rho}_t^2 = U_t + V_t + W_t.
\end{equation}
Now, we look at bounding $\mathbb{E}\big[L_{U_i}(\theta_{i-1}^*)-\hat{L}_{U_i}(\hat{\theta}_{i-1})\big]$,
$\mathbb{E}\big[L_{U_{i-1}}(\theta_{i}^*) - \hat{L}_{U_{i-1}}(\hat{\theta}_{i})\big]$ and $\mathbb{E}\big[\hat{L}_{U_i}(\hat{\theta}_{i}) - L_{U_i}(\theta_{i}^*)\big]$ in $U_t$, $V_t$ and $W_t$, respectively.

Note that, the samples at time step $i-1$ are independent with samples at time $i$, hence,
\begin{align}\label{equ:U_t_expectation}
  \mathbb{E}\big[\hat{L}_{U_i}(\hat{\theta}_{i-1})\big] & =  \mathbb{E}\Big[ \mathbb{E}_{X_{k,i}\sim \bar{\Gamma}_i, Y_{k,i}\sim p(Y|X_{k,i},\theta_i^*)}\big[\frac{1}{K_i }\sum_{i=1}^{K_i} \frac{\ell(Y_{k,i}|X_{k,i},\hat{\theta}_{i-1})}{N_i\bar{\Gamma}_i(X_{k,i})} \big]\Big]\nn\\
   & = \mathbb{E}\Big[ \mathbb{E}_{X_i\sim U_i, Y_i\sim p(Y|X_i,\theta_i^*)}\big[ \ell(Y_{t}|X_{t},\hat{\theta}_{i-1})\big]\Big]\nn\\
  & = \mathbb{E}\big[ {L}_{U_i}(\hat{\theta}_{i-1}) ].
\end{align}
Thus,
\begin{align}
  \mathbb{E}\big[L_{U_i}(\theta_{i-1}^*)-\hat{L}_{U_i}(\hat{\theta}_{i-1})\big] &= \mathbb{E}\big[L_{U_i}(\theta_{i-1}^*)-{L}_{U_i}(\hat{\theta}_{i-1})\big], \\
  \mathbb{E}\big[L_{U_{i-1}}(\theta_{i}^*) - \hat{L}_{U_{i-1}}(\hat{\theta}_{i})\big] &= \mathbb{E}\big[L_{U_{i-1}}(\theta_{i}^*) - {L}_{U_{i-1}}(\hat{\theta}_{i})\big].
\end{align}
%\frac{1}{2}{\mathrm{Tr}}\big( I_{\Gamma_t}^{-1}(\theta_t^*) I_{U_t}(\theta_t^*) \big)
We use Lemma \ref{lem:OriginalConvergence} to construct bounds for these two terms.
Let
\begin{equation}
  Q(\theta)=\big(L_{U_i}(\theta_{i-1}^*)- L_{U_i}(\theta) \big)^2, \mbox{ and } \ \psi_k (\theta)=\ell(Y_k|X_k,\theta), \quad 1\le k \le K_{i-1},
\end{equation}
where $X_k \sim \bar{\Gamma}_{i-1}$ and $Y_k \sim p(Y|X_k,\theta_{i-1}^*)$. It can be verified that
\begin{equation}
  \hat{\theta}_{i-1} = \argmin_{\theta \in \Theta} \sum_{k}^{K_{i-1}}\psi_k(\theta),
  \quad \theta^* = \argmin_{\theta \in \Theta} P(\theta) = \argmin_{\theta \in \Theta} \mathbb{E} [\psi(\theta)] =\theta_{i-1}^*,
\end{equation}
and $\nabla Q(\theta_{i-1}^*)=0$. All the conditions in Lemma \ref{lem:OriginalConvergence} are satisfied. We have
\begin{equation}
  \nabla^2 P(\theta^*) = I_{\bar{\Gamma}_{i-1}}(\theta_{i-1}^*), \quad \nabla^2 Q(\theta^*) = 2I_{U_i}(\theta_{i-1}^*).
\end{equation}
Thus,
\begin{align}
 \big(\mathbb{E}\big[L_{U_i}&(\theta_{i-1}^*)-{L}_{U_i}(\hat{\theta}_{i-1})\big] \big)^2 \nn \\
 & \le \mathbb{E}\big[\big(L_{U_i}(\theta_{i-1}^*)-{L}_{U_i}(\hat{\theta}_{i-1})\big)^2 \big] \nn\\
 %&\le \mathbb{E}[\big(L_{U_i}(\hat\theta_{i-1}) - L_{U_i}(\theta_{i-1}^*) \big)^2]\nn \\
 &\le (1+\gamma_{i-1}) \frac{\mathrm{Tr}\big(I_{\bar{\Gamma}_{i-1}}(\theta_{i-1}^*)^{-1}I_{U_i}(\theta_{i-1}^*)\big) }{K_{i-1}} + \frac{\max_{\theta\in \Theta} \left[L_{U_i}(\theta)-L_{U_i}(\theta_{i-1}^*)\right]^2}{K_{i-1}^2} \nn \\
 %& \le b\Big(\mathrm{Tr}\big( I_{\bar{\Gamma}_i}^{-1}(\theta_i^*) I_{U_i}(\theta_i^*) \big), \sqrt{{2\varepsilon}/{m}}+\rho, K_i \Big)
 &\triangleq A_i.
\end{align}
Similarly, we have
\begin{align}
 \big(\mathbb{E}[L_{U_{i-1}}&(\theta_{i}^*) - {L}_{U_{i-1}}(\hat{\theta}_{i})] \big)^2 \nn \\
 %& \le \mathbb{E}\big[\big(L_{U_{i-1}}(\theta_{i}^*) - {L}_{U_{i-1}}(\hat{\theta}_{i})\big)^2 \big] \nn\\
 &\le (1+\gamma_{i}) \frac{\mathrm{Tr}\big(I_{\bar{\Gamma}_{i}}(\theta_{i}^*)^{-1}I_{U_{i-1}}(\theta_{i}^*)\big) }{K_{i}} + \frac{\max_{\theta\in \Theta} \left[L_{U_{i-1}}(\theta)-L_{U_{i-1}}(\theta_{i}^*)\right]^2}{K_{i}^2} \nn \\
 %& \le b\Big(\mathrm{Tr}\big( I_{\bar{\Gamma}_i}^{-1}(\theta_i^*) I_{U_i}(\theta_i^*) \big), \sqrt{{2\varepsilon}/{m}}+\rho, K_i \Big)
 &\triangleq B_i.
\end{align}
For the term $\mathbb{E}\big[L_{U_i}(\theta_{i}^*) - \hat{L}_{U_i}(\hat{\theta}_{i})\big]$  in $W_t$, suppose that the samples used to estimate $\hat{\theta}_{i}$ and the samples used to compute $\hat{L}_{U_i}$ are independent. This can be done by splitting the samples at each time step $i$. Note that this assumption is just required to proceed with the theoretical analysis; we will use all the samples to estimate $\hat{\theta}_{i}$ in practice.

Then, similar argument holds as in \eqref{equ:U_t_expectation}, and we have
\begin{equation}
  \mathbb{E}\big[\hat{L}_{U_i}(\hat{\theta}_{i}) - L_{U_i}(\theta_{i}^*)\big] = \mathbb{E}\big[{L}_{U_i}(\hat{\theta}_{i}) - L_{U_i}(\theta_{i}^*)\big]\ge 0.
\end{equation}
where the inequality follows from the fact that $\theta_t^*$ is the minimizer of $L_{U_t}(\theta)$. Applying the upper bound in Lemma \ref{lem:BoundsWithArbitrarySamplingDistri}, this term can be bounded as
\begin{align}
  0 \le \mathbb{E}\big[L_{U_i}(\theta_{i}^*) - {L}_{U_i}(\hat{\theta}_{i}) \big] &\le  (1+\gamma_{i})\frac{\mathrm{Tr}\big( I_{\bar{\Gamma}_i}^{-1}(\theta_i^*) I_{U_i}(\theta_i^*)}{2K_i} + \frac{\max_{\theta\in \Theta} \left[L_{U_{i}}(\theta)-L_{U_{i}}(\theta_{i}^*)\right]}{K_{i}^2} \nn\\
  %&\le b\Big(\mathrm{Tr}\big( I_{\bar{\Gamma}_i}^{-1}(\theta_i^*) I_{U_i}(\theta_i^*) \big), \sqrt{{2\varepsilon}/{m}}+\rho, K_i \Big)\nn \\
  & \triangleq C_i.
\end{align}
The resulting bounds on the expectation of $U_t$, $V_t$, and $W_t$
denoted $\bar{U}_t$, $\bar{V}_t$, and $\bar{W}_t$ are as follows:
\begin{align}
  \bar{U}_t & = \frac{1}{m(t-1)}\sum_{i=2}^t \sqrt{A_i},\\
  \bar{V}_t & = \frac{1}{m(t-1)}\sum_{i=2}^t \sqrt{B_i},\\
  \bar{W}_t & = \frac{1}{m(t-1)}(C_1+2\sum_{i=2}^{t-1} C_i+C_t).
\end{align}
Now, we find the upper bound $\xi_i$ to upper bound the expectation as we mentioned in \eqref{equ:sub_gaussian_bound}. Then it holds that
\begin{align}
  \mathbb{P}\Big\{\rho_t^2 - \acute{\rho}_t^2 &> \bar{U}_t +\bar{V}_t+\bar{W}_t +r_t\Big\} \nn \\
   & =   \mathbb{P}\Big\{U_t+V_t+W_t > \bar{U}_t +\bar{V}_t+\bar{W}_t +r_t\Big\} \nn \\
   & \le  \mathbb{P}\Big\{U_t> \bar{U}_t + \frac{1}{3}r_t\Big\}+
    \mathbb{P}\Big\{V_t> \bar{V}_t + \frac{1}{3}r_t\Big\}+
     \mathbb{P}\Big\{W_t> \bar{W}_t + \frac{1}{3}r_t\Big\}.
\end{align}

To bound these probabilities with \eqref{equ:sub_gaussian_bound}, we first bound the moment generating functions using Lemma \ref{lem:Hoeffding},
\begin{equation}
  \frac{1}{m}|\hat{L}_{U_i}(\hat{\theta}_{i}) - L_{U_i}(\theta_{i}^*)| \le \frac{L_b}{2m}  \max_{\theta \in \Theta}\|\theta-\theta_i^*\|^2 \le \frac{L_b}{2m} \mathrm{Diameter}(\Theta)^2,
\end{equation}
and
\begin{align}
    \frac{1}{m}|L_{U_i}(\theta_{i-1}^*)-\hat{L}_{U_i}(\hat{\theta}_{i-1})|
  &\le \frac{1}{m}|L_{U_i}(\theta_{i-1}^*)-L_{U_i}(\theta_{i}^*) |+ \frac{1}{m}|L_{U_i}(\theta_{i}^*)-\hat{L}_{U_i}(\hat{\theta}_{i-1})| \nn \\
 &\le \frac{L_b}{m} \mathrm{Diameter}(\Theta)^2.
\end{align}
Then, we apply Lemma \ref{lem:sub_Gaussian} and Lemma \ref{lem:Hoeffding} with $\sigma_i^2 = \frac{L_b^2}{4m^2}  \mathrm{Diameter}^4(\Theta) $ for the terms in $U_t$ and $V_t$, and apply $\sigma_i^2 = \frac{L_b^2}{16m^2}  \mathrm{Diameter}^4(\Theta) $ for the terms in $W_t$, respectively. We have
\begin{align}
  \nu_U=\nu_V & = \frac{L_b^2}{4 m^2}  \mathrm{Diameter}(\Theta)^4 \sum_{i=2}^t \frac{1}{(t-1)^2} = \frac{L_b^2}{4(t-1)m^2}\mathrm{Diameter}(\Theta)^4, \\
  \nu_W & \le \frac{L_b^2}{16m^2}  \mathrm{Diameter}(\Theta)^4 \sum_{i=2}^t \big(\frac{2}{t-1}\big)^2 = \frac{L_b^2}{4(t-1)m^2}\mathrm{Diameter}(\Theta)^4.
\end{align}
Let $D_t \triangleq \bar{U}_t +\bar{V}_t+\bar{W}_t$. Then we obtain
\begin{align}
  \mathbb{P}\Big\{\rho_t^2> \acute{\rho}_t^2 + D_t +r_t\Big\}
   & \le   %2\exp \Big\{-\frac{m(t-1)r_t^2}{18L_b \mathrm{Diameter}^2(\Theta)} \Big\}
   3 \exp \Big\{-\frac{2m^2(t-1)r_t^2}{9L_b^2 \mathrm{Diameter}^4(\Theta)} \Big\}.
\end{align}
Then it follows the assumption in Theorem \ref{thm:rho_as} that
\begin{equation}
  \sum_{t=2}^ \infty \mathbb{P}\Big\{ \acute{\rho}_t^2 + D_t +r_t < \rho_t^2\Big\}
  \le \sum_{t=2}^ \infty  %2\exp \Big\{-\frac{m(t-1)r_t^2}{18L_b \mathrm{Diameter}^2(\Theta)} \Big\}
   3 \exp \Big\{-\frac{2m^2(t-1)r_t^2}{9L_b^2 \mathrm{Diameter}^4(\Theta)} \Big\} < \infty.
\end{equation}
Therefore, by the Borel-Cantelli Lemma, for all $t$ large enough it holds that
\begin{equation}
  \hat{\rho}_t^2 = \acute{\rho}_t^2 + D_t +r_t \ge \rho_t^2
\end{equation}
almost surely. Finally, it holds that $\rho_t^2 \ge \rho^2 $ from Lemma \ref{lem:rho_upper_bound}, which proves the result.
\end{proof}

\section{Proof of Theorem \ref{thm:mean_tracking}}
To prove Theorem \ref{thm:mean_tracking}, we use the following result from Theorem 3 in \cite{wilson2018adaptive}.

\begin{lem}
If  $\hat{\rho}_t \ge \rho$ almost surely for $t$ sufficiently large, then with
\begin{equation}
K_t \ge K_t^* \triangleq \min\Big\{K\ge 1 \Big| b\Big(d/2, \big(\sqrt{\frac{2\varepsilon}{m}}+\hat{\rho}_{t-1}\big)^2, K \Big) \le \varepsilon \Big\}
\end{equation}
samples, we have $\limsup_{t\to \infty} (\mathbb{E}[L_{U_t}(\hat{\theta}_t)] - L_{U_t}({\theta}_t^*))\le \varepsilon$ almost surely.
\end{lem}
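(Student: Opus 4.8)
The plan is to establish the excess-risk bound by an induction on $t$, with the almost-sure and asymptotic character supplied entirely by the behaviour of the estimate $\hat{\rho}_{t-1}$. The three ingredients I would assemble are the conservative convergence-rate bound $\tau_t^2 \le d/2$, the monotonicity of the SGD bound $b(\tau_t^2,\Delta_t,K_t)$ from Assumption \ref{assum:SGD_bound}, and the almost-sure lower bound $\hat{\rho}_{t-1}\ge\rho$ guaranteed by Theorem \ref{thm:rho_as}.

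First I would set up the induction hypothesis, assuming $\mathbb{E}[L_{U_{t-1}}(\hat{\theta}_{t-1})] - L_{U_{t-1}}(\theta_{t-1}^*) \le \varepsilon$ at step $t-1$. By the strong convexity in Assumption \ref{assump:regularity} this gives $\mathbb{E}\|\hat{\theta}_{t-1}-\theta_{t-1}^*\|_2^2 \le 2\varepsilon/m$, and combining it with the bounded-change condition $\|\theta_t^*-\theta_{t-1}^*\|\le\rho$ through the triangle and Jensen inequalities yields $\mathbb{E}\|\hat{\theta}_{t-1}-\theta_t^*\|_2^2 \le (\sqrt{2\varepsilon/m}+\rho)^2$, so the initialization parameter appearing in Assumption \ref{assum:SGD_bound} satisfies $\Delta_t \le \sqrt{2\varepsilon/m}+\rho$. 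In parallel I would record the worst-case rate $\tau_t^2 = \tfrac12\mathrm{Tr}(I_{\Gamma_t^*}^{-1}(\theta_t^*)I_{U_t}(\theta_t^*))\le \tfrac12\mathrm{Tr}(I_{U_t}^{-1}(\theta_t^*)I_{U_t}(\theta_t^*))=d/2$, where the inequality uses that the optimal $\Gamma_t^*$ can only improve on the feasible uniform distribution $U_t$.

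Next I would handle the replacement of the unknown $\rho$ by the estimate $\hat{\rho}_{t-1}$ in the sample-size rule. Theorem \ref{thm:rho_as} ensures $\hat{\rho}_{t-1}^2\ge\rho^2$, hence $\hat{\rho}_{t-1}\ge\rho$, almost surely for all $t$ large enough; since $b$ is monotonically increasing in its $\Delta$ argument, the surrogate radius $\sqrt{2\varepsilon/m}+\hat{\rho}_{t-1}$ dominates the true $\Delta_t$, so the $K_t^*$ selected from $b(d/2,\sqrt{2\varepsilon/m}+\hat{\rho}_{t-1},K)\le\varepsilon$ is at least as large as the one $\rho$ itself would demand. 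Using $\tau_t^2\le d/2$ and the monotonicity of $b$ in all three arguments, I would then chain
\[
\mathbb{E}[L_{U_t}(\hat{\theta}_t)]-L_{U_t}(\theta_t^*) \le b(\tau_t^2,\Delta_t,K_t) \le b\Big(d/2,\sqrt{\tfrac{2\varepsilon}{m}}+\hat{\rho}_{t-1},K_t\Big)\le\varepsilon,
\]
the last step holding for any $K_t\ge K_t^*$ by the very definition of $K_t^*$. This closes the induction, and because the enabling inequality $\hat{\rho}_{t-1}\ge\rho$ is only valid eventually and almost surely, the conclusion is naturally a $\limsup_{t\to\infty}$ bound holding almost surely—precisely the asymptotic packaging we may import from Theorem 3 of \cite{wilson2018adaptive}.

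The hard part will be justifying that the eventual-and-almost-sure validity of $\hat{\rho}_{t-1}\ge\rho$ propagates cleanly through an induction whose hypothesis at each step is itself only guaranteed for large $t$. I would address this by working on a single almost-sure event and isolating one threshold $T$ beyond which the rate estimate dominates $\rho$ and the excess-risk bound then holds for every subsequent step; the $\limsup$ formulation is exactly what turns this bookkeeping from delicate into routine, since no finite collection of early steps can affect the limiting statement.
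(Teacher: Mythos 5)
There is a genuine gap, and it sits exactly where you flag the ``hard part.'' Your argument is an induction whose step is: \emph{if} $\mathbb{E}[L_{U_{t-1}}(\hat{\theta}_{t-1})]-L_{U_{t-1}}(\theta_{t-1}^*)\le\varepsilon$, \emph{then} $\Delta_t\le\sqrt{2\varepsilon/m}+\rho$ and hence, via the monotonicity of $b$ and the definition of $K_t^*$, the bound propagates to step $t$. But you never supply a base case. Restricting attention to times past the almost-sure threshold $T$ at which $\hat{\rho}_{t-1}\ge\rho$ begins to hold does not help: nothing guarantees that the excess risk at time $T$ itself is below $\varepsilon$, and if it is not, your chain $\mathbb{E}[L_{U_t}(\hat{\theta}_t)]-L_{U_t}(\theta_t^*)\le b(d/2,\sqrt{2\varepsilon/m}+\hat{\rho}_{t-1},K_t)$ never becomes available at any subsequent step, because the middle inequality $\Delta_t\le\sqrt{2\varepsilon/m}+\rho$ is exactly the induction hypothesis you failed to establish. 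The $\limsup$ phrasing does not rescue this --- the issue is not that finitely many early steps misbehave, but that the induction may never initialize. What is actually needed (and what the paper itself does not prove but imports wholesale as Theorem 3 of \cite{wilson2018adaptive}) is a recursion argument: writing $e_t$ for the excess risk, one has $e_t\le b(d/2,\sqrt{2e_{t-1}/m}+\rho,K_t^*)$ for $t$ past the threshold, and one must show that iterating this monotone map from an \emph{arbitrary} starting value (bounded using compactness of $\Theta$) drives $e_t$ down so that $\limsup_t e_t\le\varepsilon$. The definition of $K_t^*$ only guarantees that the map sends $\varepsilon$ to something $\le\varepsilon$, i.e.\ that $[0,\varepsilon]$ is forward-invariant; attracting trajectories that start above $\varepsilon$ requires additional structure of $b$ (e.g.\ the contraction one gets from the Streaming SVRG form $b=C_1\tau^2/K+C_2(\Delta/K)^2$ for $K$ large), and this is the substantive content of the cited result, not a bookkeeping detail.

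Two smaller points. First, your bound $\tau_t^2\le d/2$ is stated for the optimal distribution $\Gamma_t^*$ at $\theta_t^*$, but the algorithm samples from $\bar{\Gamma}_t=\alpha_t\hat{\Gamma}_t^*+(1-\alpha_t)U_t$ built at $\hat{\theta}_{t-1}$; by Theorem \ref{thm:BoundsWithKnownRho} the relevant rate is $\acute{\tau}_t^2=\big(\tfrac{1+\beta_t}{1-\beta_t}\big)^2\mathrm{Tr}(I_{\Gamma_t^*}^{-1}(\theta_t^*)I_{U_t}(\theta_t^*))/(2\alpha_t)$, which need not be below $d/2$ and is itself only controlled with probability $1-\delta$. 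The paper is equally loose here, so this is a shared weakness rather than an error you introduced, but a complete proof would have to confront it. Second, note the paper states this lemma with second argument $\big(\sqrt{2\varepsilon/m}+\hat{\rho}_{t-1}\big)^2$ while Theorem \ref{thm:mean_tracking} uses the unsquared version; your proposal silently adopts the unsquared convention, which is fine but worth flagging as an inconsistency in the source.
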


\begin{proof}[Proof of Theorem \ref{thm:mean_tracking}]
From Theorem \ref{thm:rho_as}, we know that the proposed estimate $\hat{\rho}_t^2 \ge \rho^2$ almost surely, which implies $\hat{\rho}_t \ge \rho$ almost surely. Directly applying the above lemma completes the proof.
\end{proof}

\section{Estimation of $m$ and $L_{b}$}
\label{appx:estimation}

\begin{figure}
\centering     %%% not \center
\subfigure[]{\label{fig:RegEststrCvx}\includegraphics[width=0.45\textwidth]{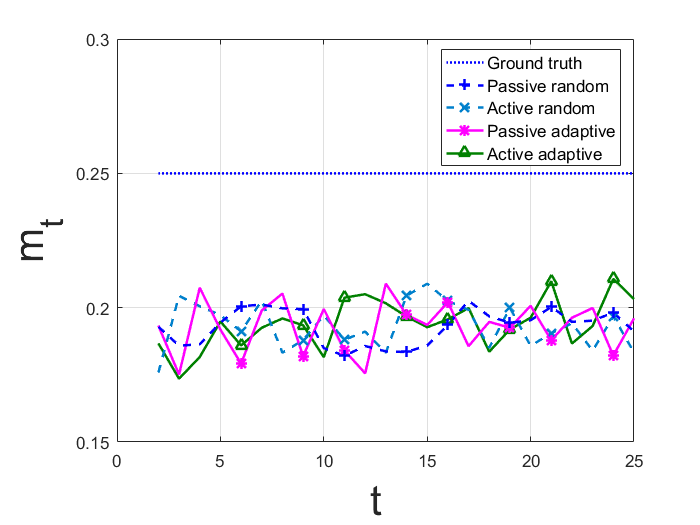}}
\subfigure[]{\label{fig:RegEstBoundedTrace}\includegraphics[width=0.45\textwidth]{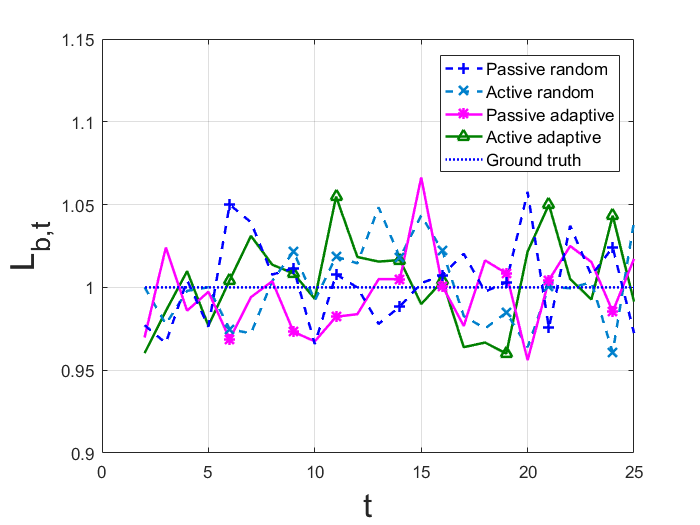}}
\caption{Estimated parameter on the regression task over synthetic data. (a) Estimated strongly convex parameter. (b) Estimated largest eigenvalue.}
\end{figure}

We construct the estimator of $m$ and $L_b$ with the samples drawn from distribution $\bar{\Gamma}_t$. %It can be proved that our estimator can produce a safe and conservative estimation of $\rho_t$ and $K_t^*$.
By the assumption of strongly convexity, we have
\begin{equation}
L_{U_t}(\theta) \geq L_{U_t}(\theta') + \langle
\nabla L_{U_t}(\theta'), \theta-\theta' \rangle + \frac{m}{2} \| \theta-\theta' \|^2, ~\forall \theta, \theta' \in \Theta,
\end{equation}
which implies that
\begin{equation}\label{equ:mLeq}
m \leq \frac{ L_{U_t}(\theta) - L_{U_t}(\theta') - \langle
\nabla L_{U_t}(\theta'), \theta-\theta' \rangle }{\frac{1}{2} \| \theta-\theta' \|^2}
\end{equation}
holds for any $\theta, \theta' \in \Theta$.

%According to \eqref{equ:rho_estimator}, we need the local strongly convexity of $L_{U_{t-1}}$ and $L_{U_t}$ between $\hat{\theta}_{t-1}$ and $\hat{\theta}_{t}$ instead of the global strongly convexity for $\forall~\theta, \theta^* \in \Theta$.

Since $m$ is the smallest value satisfying \eqref{equ:mLeq} for any $\theta, \theta' \in \Theta$, we consider following estimator
\begin{equation}\label{equ:hatM}
\widetilde{m}_t \triangleq \min_{\theta, \theta' \in \Theta_t}\frac{2}{K_t}\sum_{k=1}^{K_t}\frac{  \ell (Y_{k,t}|X_{k,t},\theta) - \ell (Y_{k,t}|X_{k,t},\theta') - \langle \nabla \ell (Y_{k,t}|X_{k,t},\theta'), \theta-\theta' \rangle }{N_t  \bar{\Gamma}_{t}(X_{k,t})\| \theta-\theta'\|^2}.
\end{equation}

Following \eqref{equ:hatM}, we have
\begin{equation}\label{equ:MconservativeEst}
\begin{split}
\mathbb{E}( \widetilde{m}_t ) &= \mathbb{E}_{X_{k,t}\sim \Gamma_t}\left\{ \min_{\theta, \theta' \in \Theta_t}\frac{2}{K_t}\sum_{k=1}^{K_t}\frac{  \ell (Y_{k,t}|X_{k,t},\theta) - \ell (Y_{k,t}|X_{k,t},\theta') - \langle \nabla \ell (Y_{k,t}|X_{k,t},\theta'), \theta-\theta' \rangle }{N_t  \Gamma_{t}(X_{k,t})\| \theta-\theta'\|^2} \right\}\\
&\leq \min_{\theta, \theta' \in \Theta_t} \mathbb{E}_{X_{k,t}\sim \Gamma_t}\left\{ \frac{2}{K_t}\sum_{k=1}^{K_t}\frac{  \ell (Y_{k,t}|X_{k,t},\theta) - \ell (Y_{k,t}|X_{k,t},\theta') - \langle \nabla \ell (Y_{k,t}|X_{k,t},\theta'), \theta-\theta' \rangle }{N_t  \Gamma_{t}(X_{k,t})\| \theta-\theta'\|^2} \right\}\\
&=\min_{\theta, \theta' \in \Theta_t} \mathbb{E}_{X_{k,t}\sim U_t}\left\{\frac{2}{K_t}\sum_{k=1}^{K_t}\frac{  \ell (Y_{k,t}|X_{k,t},\theta) - \ell (Y_{k,t}|X_{k,t},\theta') - \langle \nabla \ell (Y_{k,t}|X_{k,t},\theta'), \theta-\theta' \rangle }{\| \theta-\theta'\|^2} \right\}\\
&=\min_{\theta, \theta' \in \Theta_t} \frac{ L_{U_t}(\theta) - L_{U_t}(\theta') - \langle
\nabla L_{U_t}(\theta'), \theta-\theta' \rangle }{\frac{1}{2} \| \theta-\theta' \|^2}\\
&=m,
\end{split}
\end{equation}
which implies that $\widetilde{m}_t$ is a conservative estimate of $m$. In practice, the strongly convex parameter $m$ may also vary with time $t$. Thus, we use the following estimator to combine the one-step estimator $\widetilde {m}_{t}$,
\begin{equation}
  \hat{m}_t = \min\{\widetilde{m}_{t-1}, \widetilde{m}_t \},
\end{equation}
for $t\ge2$.

Moreover, following the boundedness assumption in Assumption \ref{assump:regularity}, we have
\begin{equation}
\max_{\theta\in\Theta} ~\lambda_{\max}~\left[ I_{U_t}(\theta)\right] \leq L_{{b}},
\end{equation}
where $\lambda_{\max}(\cdot)$ denotes the maximal eigenvalue of a square matrix. In this case, we consider following estimator
\begin{equation}
\hat{L}_{b,t} \triangleq \max_{\theta_t\in\Theta_t} ~ \lambda_{\max}~\left[ \frac{1}{K_t}\sum_{k=1}^{K_t} \frac{1}{N_t}\frac{1}{\Gamma_{t}(X_{k,t})} H(X_{k,t}, \theta_t)\right].
\end{equation}
Similarly, $\hat{L}_{b}$ is also a conservative estimate of $L_{\mathrm{b}}$. That is,
\begin{equation}\label{equ:EstLb}
\begin{split}
\mathbb{E} (\hat{L}_{b,t}) &= \mathbb{E}_{X_{k,t}\sim \Gamma_t} \left\{ \max_{\theta_t\in\Theta_t} ~\lambda_{\max}~\left(  \frac{1}{K_t}\sum_{k=1}^{K_t} \frac{1}{N_t}\frac{1}{\Gamma_{t}(X_{k,t})} H(X_{k,t}, \theta_t) \right) \right\}\\
&\geq \max_{\theta_t\in\Theta_t}~\mathbb{E}_{X_{k,t}\sim \Gamma_t}\left\{\lambda_{\max}~\left[  \frac{1}{K_t}\sum_{k=1}^{K_t} \frac{1}{N_t} \frac{1}{\Gamma_{t}(X_{k,t})} H(X_{k,t}, \theta_t) \right] \right\} \\
&\ge \max_{\theta_t\in\Theta_t}~\lambda_{\max}~\left[ I_{U_t}(\theta_t)\right]\\
&=L_{b}.
\end{split}
\end{equation}

Fig. \ref{fig:RegEststrCvx} and Fig. \ref{fig:RegEstBoundedTrace} demonstrate our estimation of $\hat{m}_t$ and $\hat{L}_{b,t}$ in the synthetic regression problem described in Section \ref{Sec:experiments}, respectively. %Our estimator produces a conservative estimate of $m$. However, since in regression task the Hessian matrix is not related with $\theta$, equality holds in the inequality shown in \eqref{equ:EstLb}, and hence the estimator produces an unbiased estimate of $L_{b,t}$.

%
%\subsection{Bounds for SGD}
%In practice, the ERM solution in \eqref{equ:MLE} cannot be solved accurately, we can only apply optimization algorithm such as SGD to find an approximate minimizer.
%Thus, we need the following assumptions on the optimization algorithm to conduct our adaptive sequential learning:
%
%\begin{thm}
%\begin{equation}\label{equ:UpperBoundOfKnownThm}
%\mathbb{E}[L_{U_t}(\hat{\theta}_t) - L_{U_t}(\theta_t^*)] \leq  \frac{2L_b\mathrm{Tr}\Big( I_{\Gamma_t^*}^{-1}(\theta_t^*) I_{U_t}(\theta_t^*) \Big)}{mK_t}+ \frac{2 L_b^2\mathbb{E}[L_{U_t}(\hat{\theta}_{t-1}) - L_{U_t}(\theta_t^*)]}{m^2K_t^2}.
%\end{equation}
%\end{thm}

%  & \le \frac{1}{K_t}\mathbb{E}\bigg[ \mathbb{E}_{X_t\sim \Gamma_t, Y_t\sim P(Y|X,\theta_t^*)} \Big[\frac{1}{N_t}\frac{1}{\Gamma_t(X_t)}\big(\log\frac
%  {p(Y_{t}|X_{t},\hat{\theta}_{t})}{p(Y_{t}|X_{t},\hat{\theta}_{t-1})}
%  -\log\frac
%  {p(Y_{t}|X_{t},\theta_{t}^*)}{p(Y_{t}|X_{t},\theta_{t-1}^*)}\big) \Big]^2\bigg]\nn \\
%  & \le \frac{1}{K_t} \mathbb{E}\bigg[ \mathbb{E}_{X_t\sim U_t, Y_t\sim P(Y|X,\theta_t^*)}\Big[ \frac{1}{N_t}\frac{1}{\Gamma_t(X_t)} \log\frac
%  {p(Y_{t}|X_{t},\hat{\theta}_{t})}{p(Y_{t}|X_{t},\hat{\theta}_{t-1})}\Big]\bigg]

\section{Experiments on Synthetic Classification} %and Spam Detection

%\textbf{Synthetic classification:}

We consider solving a sequence of binary classification problems by using logistic regression. At time $t$, the features of two classes are drawn from Gaussian distribution with different means $\mu_{1,t}$ and $-\mu_{1,t}$. More specifically, the features are 2-dimensional Gaussian vectors with $\|\mu_{1,t}\|_2=2$ and variance $0.25 I$. The parameter $\theta_t$ is learned by minimizing the following log-likelihood function
\begin{equation}\label{equ:LogLikeliLogisticRegression}
\ell(y_{k,t}|x_{k,t},\theta_t) = \log(1 + \exp^{-y_{k,t}\theta_t^\top x_{k,t}}).
\end{equation}
To ensure the change of minimizers is bounded, we set that $\mu_{1,t}$ is drifting with a constant rate along a $2$-dimensional sphere. We further set $\rho=0.1$ and $\epsilon=0.5$.

%Moreover, the change rate of true minimizers is set to be $\rho = 0.1$ and the target excess risk is $\varepsilon=0.5$.

\begin{figure}[htp!]
\centering     %%% not \center
\subfigure[]{\label{fig:SynClassTestloss}\includegraphics[width=0.32\textwidth]{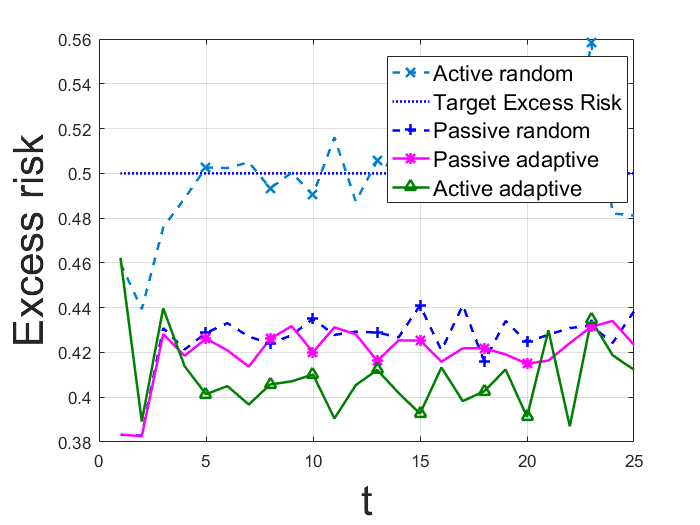}}
\subfigure[]{\label{fig:SynClaRhoEst}\includegraphics[width=0.32\textwidth]{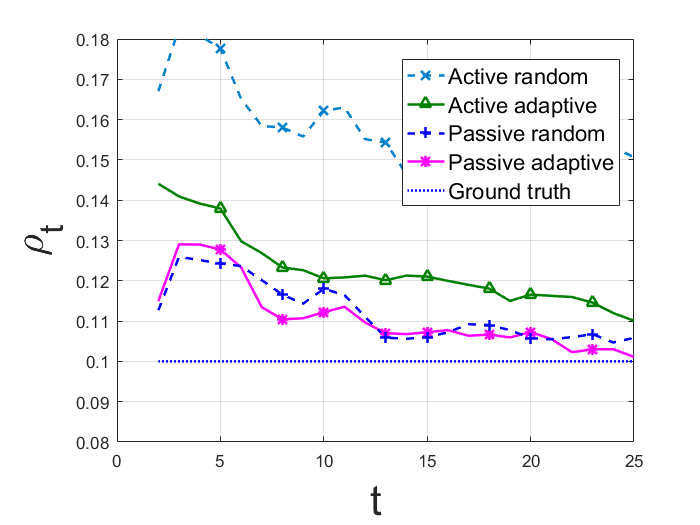}}
\subfigure[]{\label{fig:SynClassError}\includegraphics[width=0.32\textwidth]{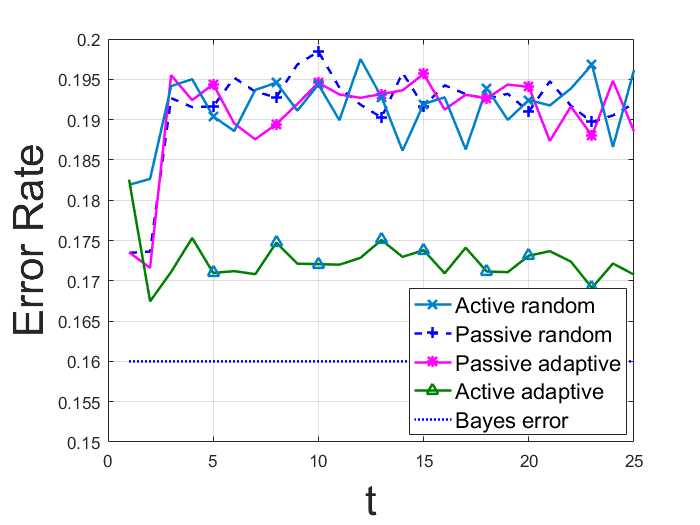}}
%\subfigure[]{\label{fig:SpamClassError}\includegraphics[width=0.32\textwidth]{SpamTestLoss.png}}
%\subfigure[]{\label{fig:SpamClassRhoEst}\includegraphics[width=0.32\textwidth]{SpamRhoEst.png}}
%\subfigure[]{\label{fig:SpamClassError}\includegraphics[width=0.32\textwidth]{SpamErrPer.png}}
\caption{Experiments on synthetic classification: (a) Excess risk. (b) Estimated rate of change of minimizers. (c) Classification error. }\label{fig:SynClassAndSpamRes}
\end{figure}

Fig. \ref{fig:SynClassAndSpamRes} shows that active adaptive learning outperforms other baseline algorithms in the synthetic classification problem.

%Experiments on Spam detection: (d) Excess risk. (e) Estimated rate of change of minimizers. (f) Classification error.

%\textbf{Spam detection:} We use the ECML/PKDD Discovery Challenge 2006 dataset \footnote{http://www.ecmlpkdd2006.org/challenge.html} and consider the spam detection problem using logistic regression model. In this dataset, the features of E-mails are the frequencies  of the keywords in the E-mail. Since the dictionary size of this dataset is of about 150,000 words, we first reduce the dimension of these features by selecting $10$ keywords which are the most different between the spam and normal E-mails in terms of the frequency. Then, we train the spam classifier of each user by minimizing the loss function in \eqref{equ:LogLikeliLogisticRegression}.
%
%The spam classifier for each user is different but related, and hence we assume that the changes in the minimizers are bounded. We take the emails of 8 different users from the dataset as the unlabeled data pool of the sequence of machine learning problems, which need to be solved in our active and adaptive learning framework. In this case, we apply our algorithms and sequentially learn the spam classifier under target excess risk $\varepsilon=1$.
%

%Fig. \ref{fig:RegFigures} and Fig. \ref{fig:SynClassAndSpamRes} show that the empirical excess risk of the passive and adaptive algorithm can be lower than the target excess risk, implying that our bound

\end{document}